\crefname{hypothesis}{Assumption}{Hypotheses}
\newcommand{\hypothesis}[2]{%
  \refstepcounter{hypothesis}%
  \noindent\textbf{Assumption (A#1):} \textit{#2} \label{hyp:A#1}%
}
\newcounter{hypothesis}
\theoremstyle{plain}
\newtheorem{theorem}{Theorem}[section]
\newtheorem{proposition}[theorem]{Proposition}
\newtheorem{lemma}[theorem]{Lemma}
\theoremstyle{definition}
\newtheorem{definition}[theorem]{Definition}
\theoremstyle{remark}
\icmltitlerunning{Statistical Collusion by Collectives on Learning Platforms}
\begin{document}

\twocolumn[
\icmltitle{Statistical Collusion by Collectives on Learning Platforms}



\icmlsetsymbol{equal}{*}

\begin{icmlauthorlist}
\icmlauthor{Etienne Gauthier}{inria}
\icmlauthor{Francis Bach}{inria}
\icmlauthor{Michael I. Jordan}{inria,berk}
\end{icmlauthorlist}

\icmlaffiliation{inria}{Inria, Ecole Normale Supérieure, PSL Research University,
Paris}
\icmlaffiliation{berk}{Departments of EECS and Statistics, University of California, Berkeley}

\icmlcorrespondingauthor{Etienne Gauthier}{etienne.gauthier@inria.fr}

\icmlkeywords{Machine Learning, ICML}

\vskip 0.3in
]



\printAffiliationsAndNotice{}  

\begin{abstract}
As platforms increasingly rely on learning algorithms, collectives may form and seek ways to influence these platforms to align with their own interests. This can be achieved by coordinated submission of altered data. To evaluate the potential impact of such behavior, it is essential to understand the computations that collectives must perform to impact platforms in this way. In particular, collectives need to make a priori assessments of the effect of the collective before taking action, as they may face potential risks when modifying their data. Moreover they need to develop implementable coordination algorithms based on quantities that can be inferred from observed data. We develop a framework that provides a theoretical and algorithmic treatment of these issues and present experimental results in a product evaluation domain.
\end{abstract}

\section{Introduction}
The dynamic interaction among agents and algorithms creates a complex ecosystem where unanticipated individual and collective behavior can emerge. The study of such behavior is crucial for understanding how to design systems that are robust, fair, and aligned with societal values. 

In a network, agents often have diverse motivations and multiple incentives. When the incentives of the interacting agents do not fully align with those of the designer of the learning system, the former may wish to influence the learning process.  This becomes salient in the common case in which a learning system interacts with a large number of agents. Such agents may collaborate, forming a cartel, by pooling their data and devising a common strategy.  Accordingly, even if the learning algorithm is robust to single-agent adversarial behavior, the collective may be able to exert a significant influence on the algorithm. This concept of collective action has its origins in economic theories \cite{olson1965logic} and has been more recently explored in the context of machine learning \cite{hardt2023collectiveaction}. It is crucial to understand how such collectives can influence learning algorithms to better comprehend the dynamics within the network of agents and to deploy algorithms that are reliable and aligned with the majority interests of consumers. We will focus specifically on the case of a platform that deploys a learning algorithm and interacts with a population of consumers. 

\vspace{-0.19em}
\paragraph{Collective Action in Machine Learning.}

We draw on the work of \citet{hardt2023collectiveaction}, who investigate the following problem: a population of individuals $(x,y) \in \mathcal{X} \times \mathcal{Y}$, drawn i.i.d.~from a distribution $\mathcal{D}$, interacts with a platform. A collective of relative size $\alpha \in (0,1)$ forms within this population with the goal of influencing the platform. The collective's influence is quantified by a success metric~$S(\alpha)$, the definition of which depends on the collective's objective. To influence the platform, members of the collective modify their data according to a common strategy~$h: \mathcal{X}\times\mathcal{Y} \rightarrow  \mathcal{X}\times\mathcal{Y}$, which maps the original features and labels of the data to modified features and labels. The platform observes a mixture of distributions $\alpha \tilde{\mathcal{D}} + (1-\alpha)\mathcal{D}$, where $\tilde{\mathcal{D}}$ is the distribution of $h(z)$ with $z \sim \mathcal{D}$. The platform selects a classifier based on this distribution. Thus, the objective for the collective is to choose a strategy $h$ that maximizes its success $S(\alpha)$. \citet{hardt2023collectiveaction} propose strategies $h$ for two distinct goals: \emph{signal planting} and \emph{signal erasing}. They derive lower bounds on the success $S(\alpha)$, enabling the identification of a minimum collective size $\alpha^*$ such that $S(\alpha) \ge S^*$ for all $\alpha \ge \alpha^*$, where $S^*$ is a target threshold.

Unfortunately, several of the strategies $h$ studied by \citet{hardt2023collectiveaction} are not available in practice to the collective. Furthermore, the bounds obtained on the success $S(\alpha)$ depend on key parameters that are unknown to the collective. Lastly, in practice, the collective seeks a priori guarantees of success before altering its data, as modifying the data may expose members to risk. In this paper, we introduce a new framework to enable members of the collective to learn strategies $h$ efficiently and to infer the parameters that determine their success on the platform.

\paragraph{Contributions.}

Our primary contribution is the introduction of a novel framework which empowers collectives via statistical inference. This statistical framework allows for the derivation of three key results, each tied to a distinct objective that a collective may pursue to influence a platform: signal planting, signal erasing, and a new objective we introduce, \emph{signal unplanting}.

We thoroughly explore strategies that a collective can employ to achieve each of these goals and provide theoretical guarantees for the effectiveness of these strategies. The statistical inference capabilities of the collective serve two main purposes: first, to estimate the most effective strategies for influencing the platform, and second, to infer key parameters that determine the collective’s success. This dual approach allows the collective to predict its potential impact on the platform with high probability.

Our framework reduces to that of \citet{hardt2023collectiveaction} in the infinite data regime, although we note that we improve on earlier results in this regime---our lower bounds are tighter than those from \citet{hardt2023collectiveaction}. Our main focus, however, is the setting in which the signal set is finite.  We obtain bounds in this setting that form staircase-like curves. This result provides a new interpretation of data poisoning bounds: the success of attacks depends not just on the signal set to be poisoned as a whole but more precisely on each feature within it. Each feature has a resistance to poisoning, requiring a specific level of attack to breach it, resulting in these staircase patterns. When the steps are close enough, these curves resemble the smooth sigmoid shapes seen in data poisoning literature.

We construct a synthetic dataset to validate our theoretical findings and to examine the influence of various parameters empirically. Our empirical results highlight, among other things, that the effectiveness of the collective depends not only on its \emph{relative} size compared to the total number of agents interacting with the platform but also on its \emph{absolute} size. A larger collective, in absolute terms, can obtain better statistical estimates and thus more accurately infer optimal strategies. This result implies that larger platforms may be more vulnerable to collective action. 

\section{Related Work}
Our research builds upon and extends the concept of collective action in machine learning, a framework originally introduced by \citet{hardt2023collectiveaction}. Collective action relates closely to data poisoning attacks in machine learning, a subset of security attacks that disrupt model training by injecting malicious data to degrade performance or alter predictions. Of particular virulence, backdoor attacks embed a hidden trigger in the data that activates malicious behavior only when the trigger appears, making them subtle and hard to detect. For comprehensive discussions on data poisoning, backdoor attacks, and defense mechanisms, we refer the reader to the surveys by \citet{tian2022survey}, \citet{guo2021survey}, and \citet{cina2023survey}.

Data poisoning is a critical topic in machine learning. Many empirical studies focus on backdoor attacks and the defense mechanisms for learning algorithms. However, there is relatively little research that analyzes the effectiveness of these attacks theoretically. \citet{grosse2022backdoor} show that backdoor patterns induce a stable representation of the target class. The classifier relies on the backdoor trigger and disregards other features. In the case of binary classification, \citet{manoj2021backdoor} demonstrate that if the model has a property called nonzero memorization capacity, then a successful backdoor attack is possible. The model's vulnerability is assessed based on its ability to memorize out-of-distribution values. In particular,  overparameterized
linear models have higher memorization capacity and are more susceptible to attacks. \citet{xian2023understanding} also investigate the context of binary classification and propose a hypothesis regarding the distribution of poisoned data, which allows them to derive useful results on the effectiveness of an attack. \citet{wang2023demystifying} explore the effectiveness of backdoor attacks from a statistical standpoint. They provide bounds on the statistical risks associated with a poisoned model, specifically analyzing how these risks manifest when the model is evaluated on both clean and backdoored data for a finite sample size. \citet{li2024backdoorcnn} present a theoretical examination of a backdoor attack applied to a convolutional neural network with two layers. \citet{cina2022backdoorlearningcurvesexplaining}  conduct an empirical study on the learning curves associated with backdoor attacks. Moreover, they demonstrated that classifiers with stronger regularization are generally more resistant to poisoning attacks, although this comes with a slight decrease in accuracy on clean data. 

What sets the concept of collective action apart conceptually is its treatment of the collective as a group of individuals, each representing a data point. This perspective also has economic, social, and political dimensions, as certain groups of individuals can unite and collaborate to influence decisions. Such ideas have been explored in areas of research at the intersection of machine learning and other fields \cite{vincent2019strikes, albert2020politicsadversarialmachinelearning, vincent2021dataleverage, albert2021adversarialgoodadversarialml, creager2023online,  vincent2021datacontribution}. See Appendix A of \citet{hardt2023collectiveaction} for an in-depth analysis of the related work on collective action. In addition, \citet{bendov2024role} highlight how the learning algorithm shapes the success of collective action.

An important contribution of \citet{hardt2023collectiveaction} is to study data poisoning via the formalism of Bayes-optimal classification, which yields a conceptual inversion of the idea of strategic classification \cite{hardt2016strategicclassification}. While strategic classification revolves around a firm's ability to anticipate and respond to the actions of a single, strategic individual, collective action shifts the focus toward a scenario where individuals collectively anticipate and strategically respond to the optimizing behavior of the firm. This concept has also been explored by \citet{zrnic2021wholeads}. Unlike traditional strategic classification, which primarily considers the firm’s perspective, collective action highlights the role of workers and consumers on online platforms.

\section{Statistical Algorithmic Collective Action in Classification}

First, we will describe the new setting for deriving theoretical bounds in collective action that are effectively computable by the collective. We then present our three main results, which   address three different objectives for the collective. Two of these objectives are classic goals of collective action: signal planting and signal erasing. Additionally, we introduce a new objective: signal unplanting, where the collective aims to prevent an association between the signal set and a certain label. We will explore strategies and provide theoretical guarantees for each of these objectives.

\subsection{Setting}

We consider a platform that deploys a learning algorithm in a universe $\mathcal{X} \times \mathcal{Y}$. We assume that $\mathcal{X} \times \mathcal{Y}$ is finite. Each individual corresponds to a single data point $(x,y) \in \mathcal{X} \times \mathcal{Y}$.

The platform trains a classifier $\hat{f}$ on a training dataset. The training dataset is composed of $N$ consumers which are initially drawn i.i.d.\ according to some distribution $\mathcal{D}$. Among these consumers, a certain number $n < N$ forms a collective to strategically influence the firm's behavior. The collective shares a common strategy $h: \mathcal{X} \times \mathcal{Y} \rightarrow \mathcal{X} \times \mathcal{Y}$. The $N-n$ base consumers and the $n$ members of the collective together form an empirical distribution of consumers $\hat{\mathcal{P}}$, and the corresponding dataset constitutes the training set. The collective wants to obtain guarantees on the influence they have on the platform at test time.

\textbf{Notation.} Given a distribution $\mathcal{Q}$ over $\mathcal{X} \times \mathcal{Y}$, we denote by $\mathcal{Q}_\mathcal{X}$ the marginal distribution over features. We will  simply write $\mathcal{Q}$ when the context allows. We denote by $\tilde{\mathcal{D}}$ the distribution of $h(z), z \sim \mathcal{D}$. More generally, for a dataset $D$, we write $\tilde{D}:= \{h(z) \mid z \in D \}$ (as a multiset) for the same dataset after applying strategy~$h$. We will also write, for $E \subseteq \mathcal{X} \times \mathcal{Y}$, $\underset{z \sim D}{\hat{\mathbb{P}}}(z \in E) := \frac{1}{\# D } \sum_{z_i \in D} \mathbb{1}_{\left\{z_i \in E \right\}}$ the empirical probability of the event $E$ induced by a dataset~$D$. We use $\hat{\mathbb{P}}$ to denote empirical probabilities, and $\mathbb{P}$ for population probabilities. When the variables do not need to be explicitly stated, we may write $\underset{D}{\hat{\mathbb{P}}}(E)$ and $\underset{D}{\mathbb{P}}(E)$  instead of $\underset{z \sim D}{\hat{\mathbb{P}}}(z \in E)$ and $\underset{z \sim D}{\mathbb{P}}(z \in E)$ respectively.

\textbf{The collective.} Given a test set of consumers $D_{\rm test} \overset{i.i.d.}{\sim} \mathcal{D}$, the collective's goal is to obtain guarantees with high probability on their success $\hat{S}(n)$ as a function of $D_{\rm test}$. The definition of $\hat{S}(n)$ is based on the objective desired by the collective. We will consider three objectives: signal planting, signal unplanting, and signal erasing. The collective modifies its data using strategy $h$ to maximize its success. 

We assume that the collective has access to the value $N$. This is a weak assumption because it is common in practice. For example, if the platform is a polling institute seeking to understand participants' voting preferences based on their demographic data, the total number of people surveyed is usually publicly available. 

Unless stated otherwise, the collective has access only to their own data and not to the data of consumers who are not part of the collective. The collective can pool its data to infer quantities that depend on the underlying distribution. Throughout this paper, we will use Hoeffding's concentration inequality (Lemma \ref{lma:hoeffding}) for  simplicity. We will denote Hoeffding error terms as follows:
\[
R_{\delta}(k) := \sqrt{\frac{\log(1/\delta)}{2k}},
\]
for any $\delta > 0$ and $k \in \mathbb{N^*}$. We note that Hoeffding’s inequality can be loose, for example when applied to sums of Bernoulli random variables with means close to zero. One can address this issue by using other concentration inequalities such as Bernstein; the framework we present here can readily incorporate such choices.

\textbf{The platform.} The firm observes an empirical distribution of consumers $\hat{\mathcal{P}}$. It selects a classifier $\hat{f}$ based on this distribution $\hat{\mathcal{P}}$. Following \citet{hardt2023collectiveaction}, we characterize classifiers $\hat{f}$ by their suboptimality in terms of total variation distance with respect to the observed distribution $\hat{\mathcal{P}}$:

\begin{definition}
Let  $\varepsilon> 0$. A classifier $\hat{f}: \mathcal{X} \rightarrow \mathcal{Y}$ is $\varepsilon$-suboptimal on a set $\mathcal{X}' \subseteq \mathcal{X}$ under the distribution $\hat{\mathcal{P}}$ if there exists a distribution $\tilde{\mathcal{P}}$ with $TV(\hat{\mathcal{P}}, \tilde{\mathcal{P}}) \le \varepsilon$ such that
\[
\hat{f}(x) \in \underset{y \in \mathcal{Y}}{\text{argmax }} \tilde{\mathcal{P}}(x,y)
\]
for all $x \in \mathcal{X}'$. Here, $TV(\hat{\mathcal{P}}, \tilde{\mathcal{P}}) := \displaystyle\sup_{E \subseteq \mathcal{X}\times\mathcal{Y}} \lvert \hat{\mathcal{P}}(E)- \tilde{\mathcal{P}}(E)\rvert$ denotes the total variation distance between $\hat{\mathcal{P}}$ and $\tilde{\mathcal{P}}$.
\end{definition}

The parameter $\varepsilon$ roughly controls how much the classifier can make use of statistics that go beyond simple frequency counts. It accounts for classifiers that consider feature interactions and capture complex patterns in the data.

\subsection{Signal planting}

In signal planting, we are given a transformation $g:\mathcal{X} \rightarrow \mathcal{X}$ and a target label $y^* \in \mathcal{Y}$. The map $g$ induces a signal set defined by $\tilde{\mathcal{X}} := \{g(x) \mid x \in \mathcal{X} \}$. The success is defined as
\[
\hat{S}(n) := \underset{x \sim D_{\rm test}}{\hat{\mathbb{P}}}(\hat{f}(g(x)) = y^*),
\]
where $D_{\rm test} \overset{i.i.d.}{\sim} \mathcal{D}$ is the test set. In other words, the collective aims to enforce an association between the signal set $\tilde{\mathcal{X}}$ and a target label $y^*$ at test time.

\subsubsection{Feature-label signal planting}

A natural goal for the collective is to maximize its success, $\hat{S}(n)$. To ensure that its efforts are effective, the collective aims to establish theoretical lower bounds on $\hat{S}(n)$, as these provide a guarantee of success. To do so, the collective can play the feature-label signal planting strategy defined below. 

\begin{definition}[Feature-label signal planting strategy]
\label{def:fl_strat}
We define the \emph{feature-label signal planting strategy} as
\[ h(x, y) = (g(x), y^*) . \]
\end{definition}

We analyze the effect of this strategy on the learning platform. Formally, we are given three independent datasets: a dataset $D^{(n)} \overset{i.i.d.}{\sim} \mathcal{D}$ of $n$ consumers which are part of the collective; a dataset $D^{(N-n)} \overset{i.i.d.}{\sim} \mathcal{D}$ of $N-n$ consumers which are not part of the collective; and a dataset $D_{\rm test} \overset{i.i.d.}{\sim} \mathcal{D}$ of $N_{\rm test}$ consumers forming the test dataset. We recall that $\tilde{D}^{(n)} := \{h(z) \mid z \in  D^{(n)}\}$ (as a multiset). The training set of the platform's classifier is the concatenation of $\tilde{D}^{(n)}$ and $D^{(N-n)}$. In other words, the platform observes the distribution:
\[
\hat{\mathcal{P}}(x_0, y_0) := \frac{n}{N} \underset{\tilde{D}^{(n)}}{\hat{\mathbb{P}}}(x_0, y_0) + \frac{N-n}{N} \underset{D^{(N-n)}}{\hat{\mathbb{P}}}(x_0, y_0).
\]
The platform then chooses a classifier $\hat{f}$ based on $\hat{\mathcal{P}}$. We can now state the main result for signal planting:

\begin{theorem}[Signal planting lower bound, feature-label signal planting strategy]
\label{thm:sp}
    Let $\delta > 0$, and write $\tilde{\delta} := \delta /(2 + 2\# \tilde{\mathcal{X}} + 2\# \tilde{\mathcal{X}}\# \mathcal{Y})$. Then, by playing the feature-label signal planting strategy against a classifier that is $\varepsilon$-suboptimal on $\tilde{\mathcal{X}}$, the collective achieves with probability at least $1 - \delta$ (over the draw of the consumers):
    \begin{align}
    \label{ineq:sp}
    \begin{split}
        \hat{S}(n) &\ge \underset{\tilde{x} \sim \tilde{D}^{(n)}}{\hat{\mathbb{P}}} \left[  \frac{n}{N}\left(\underset{\tilde{D}^{(n)}}{\hat{\mathbb{P}}}(\tilde{x}) - 2R_{\tilde{\delta}}(n)\right)\right.\\
        &\left. \quad -\frac{N-n}{N} \left(\Delta_{\tilde{x}}^{(n)} + 2R_{\tilde{\delta}}(n) + 2R_{\tilde{\delta}}(N-n)\right)\right.\\
        & \left. \quad - \frac{\varepsilon}{1-\varepsilon}> 0 \right] - R_{\tilde{\delta}}(n) - R_{\tilde{\delta}}(N_{\rm test}),
    \end{split}
    \end{align}
    where $\Delta_{\tilde{x}}^{(n)} := \underset{y' \in \mathcal{Y}\backslash \{y^* \}}{\max } \underset{D^{(n)} }{\hat{\mathbb{P}}}(\tilde{x},y') - \underset{D^{(n)} }{\hat{\mathbb{P}}}(\tilde{x}, y^*)$.
\end{theorem}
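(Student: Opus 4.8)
The plan is to push the guarantee backwards through three layers: first a deterministic, pointwise condition on the observed distribution $\hat{\mathcal P}$ that forces $\hat f(\tilde x)=y^*$; then a rewriting of that condition in terms of quantities the collective can compute exactly or estimate from $D^{(n)}$; and finally a transfer of the resulting probability from the collective's sample to the test set, with a union bound over all concentration events producing the factor $(2+2\#\tilde{\mathcal X}+2\#\tilde{\mathcal X}\#\mathcal Y)$ hidden in $\tilde\delta$. For Step 1, fix $\tilde x\in\tilde{\mathcal X}$; by $\varepsilon$-suboptimality on $\tilde{\mathcal X}$ there is $\tilde{\mathcal P}$ with $TV(\hat{\mathcal P},\tilde{\mathcal P})\le\varepsilon$ for which $\hat f(\tilde x)$ maximizes $y\mapsto\tilde{\mathcal P}(\tilde x,y)$. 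I would show that if
\[
\hat{\mathcal P}(\tilde x,y^*)-\max_{y'\neq y^*}\hat{\mathcal P}(\tilde x,y')>\tfrac{\varepsilon}{1-\varepsilon},
\]
then $y^*$ is the \emph{strict} maximizer of $\tilde{\mathcal P}(\tilde x,\cdot)$, hence $\hat f(\tilde x)=y^*$. This reduces to bounding how far a TV-$\varepsilon$ perturbation can simultaneously decrease $\tilde{\mathcal P}(\tilde x,y^*)$ and increase some $\tilde{\mathcal P}(\tilde x,y')$; carrying this out carefully---treating the perturbation as a re-normalised reallocation of at most an $\varepsilon$-fraction of mass---is what yields the $\tfrac1{1-\varepsilon}$ factor rather than a cruder linear-in-$\varepsilon$ slack.

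\textbf{Step 2 (making the condition computable).} Since $h(x,y)=(g(x),y^*)$, every collective point contributes only to label $y^*$, so $\hat{\mathcal P}(\tilde x,y^*)=\tfrac nN\hat{\mathbb P}_{\tilde D^{(n)}}(\tilde x)+\tfrac{N-n}N\hat{\mathbb P}_{D^{(N-n)}}(\tilde x,y^*)$ while $\hat{\mathcal P}(\tilde x,y')=\tfrac{N-n}N\hat{\mathbb P}_{D^{(N-n)}}(\tilde x,y')$ for $y'\neq y^*$. The gap in Step 1 thus equals $\tfrac nN\hat{\mathbb P}_{\tilde D^{(n)}}(\tilde x)-\tfrac{N-n}N\Delta_{\tilde x}^{(N-n)}$, where $\Delta_{\tilde x}^{(N-n)}$ is the $D^{(N-n)}$-analogue of $\Delta_{\tilde x}^{(n)}$. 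The collective knows $\hat{\mathbb P}_{\tilde D^{(n)}}(\tilde x)$ exactly but not $\Delta_{\tilde x}^{(N-n)}$; two Hoeffding steps (from $D^{(n)}$ to $\mathcal D$ and from $\mathcal D$ to $D^{(N-n)}$, applied to the relevant label frequencies) give $\Delta_{\tilde x}^{(N-n)}\le\Delta_{\tilde x}^{(n)}+2R_{\tilde\delta}(n)+2R_{\tilde\delta}(N-n)$ uniformly over $\tilde x\in\tilde{\mathcal X}$ and $y'\in\mathcal Y$, which reproduces the inner expression of \eqref{ineq:sp} (the extra $-2R_{\tilde\delta}(n)$ on the $\hat{\mathbb P}_{\tilde D^{(n)}}(\tilde x)$ term is the slack I will need in Step 3).

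\textbf{Step 3 (transfer to the test set and union bound).} Let $C\subseteq\tilde{\mathcal X}$ be the set of features for which the bracketed expression in \eqref{ineq:sp} is positive; by Steps 1--2, on the good event $\hat f(\tilde x)=y^*$ for every $\tilde x\in C$, so $\hat S(n)\ge\hat{\mathbb P}_{x\sim D_{\rm test}}(g(x)\in C)$. Conditioning on the training data fixes $\hat f$ and $C$, and Hoeffding on the independent test sample gives $\hat{\mathbb P}_{D_{\rm test}}(g(x)\in C)\ge\mathbb P_{\mathcal D}(g(x)\in C)-R_{\tilde\delta}(N_{\rm test})$; one more Hoeffding step comparing $\mathbb P_{\mathcal D}(g(x)\in C)$ with $\hat{\mathbb P}_{\tilde D^{(n)}}(\tilde x\in C)$ (which equals the right-hand side of \eqref{ineq:sp} before the last two error terms) supplies the remaining $-R_{\tilde\delta}(n)$. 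Collecting all concentration events---the marginal feature frequencies $\hat{\mathbb P}_{\tilde D^{(n)}}(\tilde x)$ over $\tilde x\in\tilde{\mathcal X}$ (two-sided, $2\#\tilde{\mathcal X}$), the per-label frequencies controlling the two $\Delta$'s on $D^{(n)}$ and $D^{(N-n)}$ ($2\#\tilde{\mathcal X}\#\mathcal Y$), and the two transfer steps ($2$)---each at level $\tilde\delta$, the total failure probability is at most $(2+2\#\tilde{\mathcal X}+2\#\tilde{\mathcal X}\#\mathcal Y)\tilde\delta=\delta$.

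\textbf{Main obstacle.} The delicate step is the transfer in Step 3: the set $C$ is a function of the very sample $D^{(n)}$ used to estimate the feature frequencies, so the comparison between $\mathbb P_{\mathcal D}(g(x)\in C)$ and its empirical counterpart is not a literal application of Hoeffding to a fixed event, and a naive per-feature union bound would inflate the slack by a factor $\#\tilde{\mathcal X}$. Reconciling this with the stated $R_{\tilde\delta}$ terms---which is exactly what the extra $-2R_{\tilde\delta}(n)$ cushion in the bracket is there to absorb---is where the argument needs the most care, together with establishing the sharp constant $\tfrac{\varepsilon}{1-\varepsilon}$ in Step 1.
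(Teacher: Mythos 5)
Your Steps 1 and 2 match the paper's argument: the same pointwise sufficient condition derived from $\varepsilon$-suboptimality (the paper invokes Lemma B.1 of Hardt et al.\ for the $\varepsilon/(1-\varepsilon)$ slack rather than re-deriving it), the same decomposition of $\hat{\mathcal P}$ into its collective and non-collective parts, the same two-sided Hoeffding passes bounding $\Delta_{\tilde x}^{(N-n)}$ by $\Delta_{\tilde x}^{(n)}+2R_{\tilde\delta}(n)+2R_{\tilde\delta}(N-n)$, and the same union-bound accounting. The genuine gap is exactly the one you flag as the ``main obstacle'' and then leave open: in your Step 3 the set $C$ is defined by the \emph{empirical} bracketed condition, so it is a function of $D^{(n)}$, and the proposed ``one more Hoeffding step comparing $\mathbb P_{\mathcal D}(g(x)\in C)$ with $\hat{\mathbb P}_{\tilde D^{(n)}}(\tilde x\in C)$'' is not a legitimate application of Hoeffding to a fixed event. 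The $-2R_{\tilde\delta}(n)$ cushion does not by itself repair this, and you do not supply the idea that does.

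The paper's resolution is an ordering trick. It first drives the implication chain all the way down to a condition stated purely in terms of \emph{population} quantities, namely $\frac nN\bigl(\mathbb P_{\tilde{\mathcal D}}(\tilde x)-R_{\tilde\delta}(n)\bigr)-\frac{N-n}N\bigl(\Delta_{\tilde x}^{(\mathcal D)}+2R_{\tilde\delta}(N-n)\bigr)>0$, whose truth set is a deterministic subset of $\tilde{\mathcal X}$. Hoeffding is then applied \emph{once} to this fixed event to convert $\mathbb P_{\tilde{\mathcal D}}(\cdot)$ into $\hat{\mathbb P}_{\tilde D^{(n)}}(\cdot)$ at cost $R_{\tilde\delta}(n)$ (the paper's Inequality (6)). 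Only afterwards, inside the now-empirical frequency over $\tilde D^{(n)}$, are the population quantities in the indicator replaced by their empirical counterparts; on the good concentration event the empirical condition implies the population one, so this replacement only shrinks the event and preserves the inequality. The round trip empirical $\to$ population $\to$ empirical is what produces the $2R_{\tilde\delta}(n)$ terms in the bracket, and it is also the device that makes the transfer a single Hoeffding application to a non-random event rather than an invalid concentration step for a data-dependent set. With this reordering your outline closes; without it, Step 3 does not go through as written.
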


Proofs and additional remarks can be found in Appendix~\ref{app:theorems}. Note that lower bound (\ref{ineq:sp}) is fully computable by the collective as it depends only on datasets $D^{(n)}$ and $\tilde{D}^{(n)}$. $D^{(n)}$ is obtained through data pooling by the collective’s members, and $\tilde{D}^{(n)}$ is computed by applying strategy $h$ to the dataset~$D^{(n)}$. Details on the algorithm for computing the lower bound are provided in Appendix \ref{app:algo}, and the code is available at: \url{https://github.com/GauthierE/statistical-collusion}.

The interpretation of lower bound (\ref{ineq:sp}) is that success increases step by step: each feature $\tilde{x}$ in the signal set $\tilde{\mathcal{X}}$ has a certain \emph{resistance} to being planted. As the relative size of the collective $n/N$ gradually increases, features $\tilde{x}$ are \emph{cracked} as their resistance breaks, in decreasing order of resistance. For each feature $\tilde{x}$, its resistance depends on three terms.  The first one, here $\frac{n}{N}(\underset{\tilde{D}^{(n)}}{\hat{\mathbb{P}}}(\tilde{x}) - 2R_{\tilde{\delta}}(n))$, represents how prevalent the feature $\tilde{x}$ is in the modified data: the more frequently $\tilde{x}$ appears in the poisoned data, the greater the collective's ability to influence the associated label. The second term, here $-\frac{N-n}{N} (\Delta_{\tilde{x}}^{(n)} + 2R_{\tilde{\delta}}(n) + 2R_{\tilde{\delta}}(N-n))$, captures the counteracting influence of non-collective individuals in the population, quantifying how much they might limit the collective's success in planting the signal. It indicates how strongly the target label is associated with the signal set: the more frequent $y^*$ is in the signal set, the easier it is to plant the signal; if other labels are far more likely than $y^*$, planting the signal becomes more difficult. The third term, $- \frac{\varepsilon}{1-\varepsilon}$, represents the platform's ability to adapt to the distribution of its users. As $\varepsilon \mapsto \frac{\varepsilon}{1-\varepsilon}$ increases with $\varepsilon$, it benefits the collective to have $\varepsilon$ close to zero, limiting the platform's flexibility and resulting in a tighter bound.

The first term scales approximately linearly with $n$, by a factor of $n/N$, while the second term decreases by $1-n/N$. However, the dependence is more complex than purely linear. The bound involves estimation terms $R_{\tilde{\delta}}(n)$, which decay at a rate proportional to $n^{-1/2}$  as $n$ increases. The bound also depends on $R_{\tilde{\delta}}(N-n)$, but these terms can be negligible as long as $n$ remains much smaller than $N$.

Also, the cardinality of $\tilde{\mathcal{X}}$ affects the definition of $\tilde{\delta}$, and thus the estimation terms $R_{\tilde{\delta}}$. The smaller $\#\tilde{\mathcal{X}}$ is, the better the collective's estimates will be, resulting in sharper bounds.

\subsubsection{Feature-only signal planting}

Note that the feature-label signal planting strategy assumes that the members of the collective can change both their features and their labels. This might not always be feasible in practice, where labels can be immutable. In this situation, a natural strategy for the collective is to change its feature $x$ to $g(x)$ when $y=y^*$. This strategy was explored by \citet{hardt2023collectiveaction}. However, it is preferable for the collective to additionally change its feature $x$ to some $x_0$ that is not in the signal set $\tilde{\mathcal{X}}$ when $y \neq y^*$. This ensures that not only does the feature belong to the signal set $\tilde{\mathcal{X}}$ when the label is $y^*$, but also that if a feature is in the signal set $\tilde{\mathcal{X}}$, the associated label is necessarily $y^*$. This dual condition strengthens the association between the signal set $\tilde{\mathcal{X}}$ and the target label $y^*$. In this case, we can directly say that the term $\underset{\tilde{D}^{(n)} }{\hat{\mathbb{P}}}(g(x'),y')$ with $y' \neq y^*$ that would appear in the proof of Theorem \ref{thm:sp_fo} is equal to zero, leading to a sharper bound.

\begin{definition}[Feature-only signal planting strategy]
\label{def:fo_strat}
We define the feature-only signal planting strategy as
\[
h(x, y) =
\begin{cases} 
(g(x), y^*) & \text{if } y = y^*, \\
(x_0, y) & \text{otherwise},
\end{cases}
\]
where $x_0 \in \mathcal{X} \backslash \tilde{\mathcal{X}}$ is any feature that does not belong to the signal set $\tilde{\mathcal{X}}$. Note that $x_0$ does not have to be fixed across all initial data points $(x,y)$. 
\end{definition}

We can generalize Theorem \ref{thm:sp} to the feature-only strategy:

\begin{theorem}[Signal planting lower bound, feature-only signal planting strategy]
\label{thm:sp_fo}
    Let $\delta > 0$, and write $\tilde{\delta} := \delta /(2 +  2\# \tilde{\mathcal{X}} +  2\# \tilde{\mathcal{X}}\# \mathcal{Y})$. Then, by playing the feature-only signal planting strategy against a classifier that is $\varepsilon$-suboptimal on $\tilde{\mathcal{X}}$, the collective achieves with probability at least $1 - \delta$ (over the draw of the consumers):
    \begin{align}
    \label{ineq:sp_fo}
    \begin{split}
        \hat{S}(n) &\ge \underset{x' \sim D^{(n)}}{\hat{\mathbb{P}}} \left[  \frac{n}{N}\left(\underset{\tilde{D}^{(n)} }{\hat{\mathbb{P}}}(g(x'), y^*) - 2R_{\tilde{\delta}}(n)\right) \right.\\
        &\left. -\frac{N-n}{N} \left(\Delta_{g(x')}^{(n)} + 2R_{\tilde{\delta}}(n) + 2R_{\tilde{\delta}}(N-n)\right)\right. \\
        &\left.- \frac{\varepsilon}{1-\varepsilon}> 0 \right] - R_{\tilde{\delta}}(n) - R_{\tilde{\delta}}(N_{\rm test}),
    \end{split}
    \end{align}
where $\Delta_{g(x')}^{(n)}$ is defined in Theorem \ref{thm:sp}.
\end{theorem}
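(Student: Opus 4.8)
The plan is to mirror the proof template of Theorem~\ref{thm:sp}, adapting each step to the feature-only strategy. There are three moving parts: (i) a deterministic implication of the form ``if a quantity computable from $D^{(n)}$ and $\tilde{D}^{(n)}$ and evaluated at $g(x')$ is positive, then $\hat{f}(g(x')) = y^*$,'' driven by $\varepsilon$-suboptimality; (ii) concentration inequalities replacing the observed distribution $\hat{\mathcal{P}}$ (which involves the unseen $D^{(N-n)}$) by quantities the collective can evaluate; and (iii) a transfer from the collective's data to the test set.

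First I would fix $\tilde{x} \in \tilde{\mathcal{X}}$ and expand $\hat{\mathcal{P}}(\tilde{x}, y) = \tfrac{n}{N}\hat{\mathbb{P}}_{\tilde{D}^{(n)}}(\tilde{x},y) + \tfrac{N-n}{N}\hat{\mathbb{P}}_{D^{(N-n)}}(\tilde{x},y)$. The one genuinely new observation relative to Theorem~\ref{thm:sp} is that, under the feature-only strategy, any point of $\tilde{D}^{(n)}$ whose feature lies in $\tilde{\mathcal{X}}$ must carry label $y^*$: points with original label $y^*$ map to $(g(x), y^*) \in \tilde{\mathcal{X}}\times\{y^*\}$, while points with $y\neq y^*$ map to $(x_0, y)$ with $x_0 \notin \tilde{\mathcal{X}}$ by definition. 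Hence $\hat{\mathbb{P}}_{\tilde{D}^{(n)}}(\tilde{x}, y') = 0$ for every $y' \neq y^*$, which is precisely the term the paragraph preceding the theorem flags as vanishing; this is what keeps the bound in the same form as~\eqref{ineq:sp}. I would then lower bound $\hat{\mathcal{P}}(\tilde{x}, y^*) - \max_{y'\neq y^*}\hat{\mathcal{P}}(\tilde{x}, y')$: the collective contributes $\tfrac{n}{N}\hat{\mathbb{P}}_{\tilde{D}^{(n)}}(g(x'),y^*)$, and the base term is controlled by $\tfrac{N-n}{N}\Delta^{(n)}_{g(x')}$ after replacing $\hat{\mathbb{P}}_{D^{(N-n)}}$ by $\hat{\mathbb{P}}_{D^{(n)}}$ (both empirical versions of $\mathcal{D}$), paying Hoeffding terms $R_{\tilde\delta}(n)$ and $R_{\tilde\delta}(N-n)$ uniformly over $\tilde{\mathcal{X}}$ and $\tilde{\mathcal{X}}\times\mathcal{Y}$. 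Invoking the same margin argument as in Theorem~\ref{thm:sp} — a gap exceeding $\tfrac{\varepsilon}{1-\varepsilon}$ in the observed masses survives any total-variation perturbation of size $\varepsilon$ — yields: whenever the bracketed expression in~\eqref{ineq:sp_fo} is positive, $\hat{f}(g(x')) = y^*$.

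Next I would pass to the test set. Since $D_{\rm test}\overset{i.i.d.}{\sim}\mathcal{D}$ carries all labels, not only $y^*$, and $\hat{S}(n) = \hat{\mathbb{P}}_{x\sim D_{\rm test}}(\hat{f}(g(x)) = y^*)$, the natural proxy for the success is $\hat{\mathbb{P}}_{x'\sim D^{(n)}}$ applied to the sufficient condition above — which is exactly why the outer average in~\eqref{ineq:sp_fo} is over $D^{(n)}$ rather than $\tilde{D}^{(n)}$, in contrast with~\eqref{ineq:sp}. Concretely $\hat{S}(n) \geq \hat{\mathbb{P}}_{D_{\rm test}}(g(x)\in\mathcal{G})$, where $\mathcal{G}\subseteq\tilde{\mathcal{X}}$ is the set of features for which the bracket holds — data-dependent, but independent of $D_{\rm test}$ — and two further Hoeffding steps ($D_{\rm test}$ to $\mathcal{D}$, then $\mathcal{D}$ to $D^{(n)}$) produce the outer terms $-R_{\tilde\delta}(N_{\rm test})$ and $-R_{\tilde\delta}(n)$. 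Collecting the union bound over all concentration events used — the two outer ones, the feature-level ones over $\tilde{\mathcal{X}}$, and the pair-level ones over $\tilde{\mathcal{X}}\times\mathcal{Y}$ — gives the stated $\tilde\delta$.

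I expect the main obstacle to be the bookkeeping in this last transfer rather than any single inequality: the platform's classifier $\hat{f}$ depends on $\tilde{D}^{(n)}$, hence on $D^{(n)}$, so the predicate defining $\mathcal{G}$ and the empirical frequency $\hat{\mathbb{P}}_{D^{(n)}}(g(x')\in\mathcal{G})$ are built from the same sample. I would handle this by conditioning throughout on a single high-probability event on which all the required empirical-to-population deviations over the finite sets $\tilde{\mathcal{X}}$ and $\tilde{\mathcal{X}}\times\mathcal{Y}$ hold simultaneously, and then argue deterministically, so that the only fresh randomness is that of $D_{\rm test}$. The remaining pieces — the margin/total-variation computation and the algebra rearranging the three resistance terms — are identical to the proof of Theorem~\ref{thm:sp}.
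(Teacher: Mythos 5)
Your proposal is correct and follows essentially the same route as the paper, whose own proof of Theorem~\ref{thm:sp_fo} consists precisely of the two observations you make: that the feature-only strategy forces $\hat{\mathbb{P}}_{\tilde{D}^{(n)}}(\tilde{x},y')=0$ for $y'\neq y^*$, and that the identity $\mathbb{P}_{x\sim\mathcal{D}}(\hat{f}(g(x))=y^*)=\mathbb{P}_{\tilde{x}\sim\tilde{\mathcal{D}}}(\hat{f}(\tilde{x})=y^*)$ no longer holds, so the outer average must be carried out over $x'\sim D^{(n)}$ with the predicate evaluated at $g(x')$. Your handling of the data-dependent predicate (apply the $\mathcal{D}$-to-$D^{(n)}$ Hoeffding step to the deterministic population-level event and only then pass to the empirical event on the good concentration set) is also how the paper's proof of Theorem~\ref{thm:sp} resolves that point.
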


We can straightforwardly compare lower bounds (\ref{ineq:sp}) and (\ref{ineq:sp_fo}). The only difference is that in (\ref{ineq:sp_fo}), we have $\underset{\tilde{D}^{(n)} }{\hat{\mathbb{P}}}(g(x'),y^*)$ instead of $\underset{\tilde{D}^{(n)} }{\hat{\mathbb{P}}}(g(x'))$. This comparison directly measures the impact of not modifying labels on the collective's influence. In the following, we will always assume that the collective can modify its labels.

Signal planting relies on a straightforward strategy: flooding the platform with as many pairs $(g(x),y^*)$ as possible. This strategy is simple and does not require the collective to perform any statistical estimation. Statistical inference is only necessary for the collective to calculate the lower bound on success, not for defining the optimal strategy $h$. In the next two sections, we examine two objectives that additionally require statistical estimation to infer the optimal strategy $h$: signal unplanting and signal erasing.

\subsection{Signal unplanting}

The setting is essentially the same as before; the only difference is the success of the collective is defined as follows:
\[
\hat{S}(n) := \underset{x \sim D_{\rm test}}{\hat{\mathbb{P}}}\left(\hat{f}(g(x)) \neq y^*\right).
\]
The collective’s goal is now to prevent an association between the signal set $\tilde{\mathcal{X}} = \{g(x) \mid x \in \mathcal{X}\}$ and the target label $y^*$.

\subsubsection{Naive strategy}

A simple and naive strategy for the collective is to flood the platform with feature-label pairs of the form $(g(x),y')$ for some fixed $y' \neq y^*$. Indeed, for any $y' \neq y^*$, we have that
\begin{align*}
    \hat{S}(n) &= \underset{x \sim D_{\rm test}}{\hat{\mathbb{P}}}\left(\hat{f}(g(x)) \neq y^*\right) \\
    &\ge \underset{x \sim D_{\rm test}}{\hat{\mathbb{P}}}\left( \hat{f}(g(x)) = y'\right) =: \hat{S}_{y'}(n).
\end{align*}
Therefore, the collective can compute lower bounds on~$\hat{S}_{y'}(n)$ for all $y' \neq y^*$ using Theorem \ref{thm:sp}. It can then play the strategy $h(x,y)=(g(x),\bar{y})$ where $\bar{y}$ is the label that maximizes the lower bounds on $\hat{S}_{y'}(n)$ for $y' \neq y^*$. Note that this is equivalent to planting a signal with transformation $g$ and target label $\bar{y}$, so we obtain the same guarantees as in Theorem \ref{thm:sp}.

\subsubsection{Adaptive strategy}

The collective can also use an adaptive strategy, meaning that each member of the collective $(x,y)$ can change its feature-label pair to some $(g(x), y_{g(x)})$ where the modified label $y_{g(x)}$ depends on $g(x)$ and is no longer fixed. The plan for the collective is to estimate the optimal label using a subset of $n_{\rm e} < n$ randomly chosen participants. Formally, we assume that $D^{(n)}$ is the concatenation of two independent datasets $D^{(n_{\rm e})}$ and $D^{(n-n_{\rm e})}$ drawn from $\mathcal{D}$. A natural strategy for the collective is to change every pair $(x,y)$ into $(g(x),\hat{y}_{g(x)})$ where for $\tilde{x} \in \tilde{\mathcal{X}}$:
\begin{equation}
\label{def:y_unplanting}
\hat{y}_{\tilde{x}} := \underset{y' \in \mathcal{Y}\backslash\{y^*\}}{\text{argmax }} \underset{D^{(n_{\rm e})} }{\hat{\mathbb{P}}}(\tilde{x}, y' ).
\end{equation}

We formalize this strategy in the following definition.

\begin{definition}[Signal unplanting strategy]
\label{def:su_strat}
We define the \emph{signal unplanting strategy} as
\[
h(x, y) = (g(x), \hat{y}_{g(x)}),
\]
where $\hat{y}_{\tilde{x}}$ is defined in Equation (\ref{def:y_unplanting}). 
\end{definition}

Intuitively, this strategy means that the collective aims to select the most likely label among all labels different from~$y^*$, given a feature $g(x)$.

\begin{theorem}[Signal unplanting lower bound]
\label{thm:su}
    Let $\delta > 0$, and write $\tilde{\delta} := \delta /(2 + 6\# \tilde{\mathcal{X}})$. Let $n_{\rm e} < n$. Then, by playing the signal unplanting strategy above against a classifier that is $\varepsilon$-suboptimal on $\tilde{\mathcal{X}}$, the collective achieves with probability at least $1 - \delta$ (over the draw of the consumers):
    \begin{align}
    \label{ineq:su}
    \begin{split}
        \hat{S}(n) &\ge \underset{\tilde{x} \sim \tilde{D}^{(n)}}{\hat{\mathbb{P}}} \left[  \frac{n}{N}\left(\underset{ \tilde{D}^{(n)}}{\hat{\mathbb{P}}}(\tilde{x}) - 2R_{\tilde{\delta}}(n)\right) \right.\\
        & \left. -\frac{N-n}{N} \left(\Delta_{\tilde{x}}^{(n-n_{\rm e})} + 2R_{\tilde{\delta}}(n-n_{\rm e}) + 2R_{\tilde{\delta}}(N-n)\right) \right.\\
        & \left. - \frac{\varepsilon}{1-\varepsilon}> 0 \right] - R_{\tilde{\delta}}(n) - R_{\tilde{\delta}}(N_{\rm test}),
    \end{split}
    \end{align}
    where $\Delta_{\tilde{x}}^{(n-n_{\rm e})} := \underset{D^{(n-n_{\rm e})} }{\hat{\mathbb{P}}}(\tilde{x},y^*) - \underset{D^{(n-n_{\rm e})} }{\hat{\mathbb{P}}}(\tilde{x}, \hat{y}_{\tilde{x}})$.
\end{theorem}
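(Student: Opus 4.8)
The plan is to mirror the proof of Theorem~\ref{thm:sp}, the only conceptual novelty being that each member now plants the \emph{estimated} label $\hat{y}_{g(x)}$ rather than the fixed target $y^*$, so the randomness of $\hat{y}$ must be threaded carefully through the concentration bounds. The whole purpose of splitting $D^{(n)}$ into the independent pieces $D^{(n_{\rm e})}$ and $D^{(n-n_{\rm e})}$ is that every $\hat{y}_{\tilde{x}}$ is a measurable function of $D^{(n_{\rm e})}$ alone, hence is independent of $D^{(n-n_{\rm e})}$, of $D^{(N-n)}$, and of $D_{\rm test}$. I would therefore work conditionally on $D^{(n_{\rm e})}$, so that each $\hat{y}_{\tilde{x}}$ becomes a fixed label; every Hoeffding bound below then applies to genuinely fresh samples exactly as in the non-adaptive argument, and the final bound --- holding on every conditional slice --- holds unconditionally.

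Fix $\tilde{x}\in\tilde{\mathcal{X}}$. Because $h(x,y)=(g(x),\hat{y}_{g(x)})$ never produces the label $y^*$, the mass the collective contributes to the observed mixture $\hat{\mathcal{P}}$ at the cell $(\tilde{x},y^*)$ is zero, whereas at $(\tilde{x},\hat{y}_{\tilde{x}})$ it contributes exactly $\frac{n}{N}\hat{\mathbb{P}}_{\tilde{D}^{(n)}}(\tilde{x})$; consequently
\[
\hat{\mathcal{P}}(\tilde{x},\hat{y}_{\tilde{x}}) - \hat{\mathcal{P}}(\tilde{x},y^*) = \frac{n}{N}\hat{\mathbb{P}}_{\tilde{D}^{(n)}}(\tilde{x}) - \frac{N-n}{N}\bigl(\hat{\mathbb{P}}_{D^{(N-n)}}(\tilde{x},y^*) - \hat{\mathbb{P}}_{D^{(N-n)}}(\tilde{x},\hat{y}_{\tilde{x}})\bigr).
\]
Invoking the definition of $\varepsilon$-suboptimality on $\tilde{\mathcal{X}}$ --- the step being identical to the one in Theorem~\ref{thm:sp} --- whenever this gap exceeds $\frac{\varepsilon}{1-\varepsilon}$, every admissible $\tilde{\mathcal{P}}$ with $TV(\hat{\mathcal{P}},\tilde{\mathcal{P}})\le\varepsilon$ still ranks $\hat{y}_{\tilde{x}}$ strictly above $y^*$ at $\tilde{x}$, so $y^*$ cannot lie in the argmax and $\hat{f}(\tilde{x})\neq y^*$. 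It remains to bound the unobservable $D^{(N-n)}$ frequencies: since only the two labels $y^*$ and $\hat{y}_{\tilde{x}}$ are involved, two Hoeffding bounds on $D^{(N-n)}$ and two on $D^{(n-n_{\rm e})}$ --- with no union over all of $\mathcal{Y}$, which is exactly why $\tilde{\delta}$ carries the factor $6\#\tilde{\mathcal{X}}$ and not the $2\#\tilde{\mathcal{X}}\#\mathcal{Y}$ of Theorem~\ref{thm:sp} --- give $\hat{\mathbb{P}}_{D^{(N-n)}}(\tilde{x},y^*) - \hat{\mathbb{P}}_{D^{(N-n)}}(\tilde{x},\hat{y}_{\tilde{x}}) \le \Delta_{\tilde{x}}^{(n-n_{\rm e})} + 2R_{\tilde{\delta}}(n-n_{\rm e}) + 2R_{\tilde{\delta}}(N-n)$. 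Hence a feature is ``cracked'' as soon as $\frac{n}{N}\hat{\mathbb{P}}_{\tilde{D}^{(n)}}(\tilde{x}) - \frac{N-n}{N}(\Delta_{\tilde{x}}^{(n-n_{\rm e})} + 2R_{\tilde{\delta}}(n-n_{\rm e}) + 2R_{\tilde{\delta}}(N-n)) > \frac{\varepsilon}{1-\varepsilon}$.

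Finally I would convert this into a guarantee on $\hat{S}(n)=\hat{\mathbb{P}}_{x\sim D_{\rm test}}(\hat{f}(g(x))\neq y^*)$ expressed through $\tilde{D}^{(n)}$ alone. The device is a \emph{population} cracked set $\mathcal{C}$ obtained from the displayed criterion by replacing $\hat{\mathbb{P}}_{\tilde{D}^{(n)}}(\tilde{x})$ with $\mathbb{P}_{\tilde{\mathcal{D}}}(\tilde{x})-R_{\tilde{\delta}}(n)$ and $\Delta_{\tilde{x}}^{(n-n_{\rm e})}+2R_{\tilde{\delta}}(n-n_{\rm e})$ with the population gap $\mathbb{P}_{\mathcal{D}}(\tilde{x},y^*)-\mathbb{P}_{\mathcal{D}}(\tilde{x},\hat{y}_{\tilde{x}})$; on the conditional space $\mathcal{C}$ is deterministic. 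Using the one- and two-sided Hoeffding bounds for $\hat{\mathbb{P}}_{\tilde{D}^{(n)}}(\tilde{x})$, $D^{(n-n_{\rm e})}$ and $D^{(N-n)}$, one checks on the concentration event that (a) $\mathcal{C}$ is contained in the truly-cracked set, so $\hat{S}(n)\ge\hat{\mathbb{P}}_{\tilde{D}_{\rm test}}(\mathcal{C})$; and (b) the collective-computable event $\{\ldots>0\}$ appearing in~(\ref{ineq:su}) --- whose extra $-2R_{\tilde{\delta}}(n)$ inside the bracket is exactly the slack that absorbs the gap between $\hat{\mathbb{P}}_{\tilde{D}^{(n)}}(\tilde{x})$ and $\mathbb{P}_{\tilde{\mathcal{D}}}(\tilde{x})$ --- is contained in $\mathcal{C}$, so $\hat{\mathbb{P}}_{\tilde{D}^{(n)}}(\mathcal{C})$ dominates the probability in the statement. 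Two single-event Hoeffding bounds, $\hat{\mathbb{P}}_{\tilde{D}_{\rm test}}(\mathcal{C})\ge\mathbb{P}_{\tilde{\mathcal{D}}}(\mathcal{C})-R_{\tilde{\delta}}(N_{\rm test})$ on the independent test sample and $\mathbb{P}_{\tilde{\mathcal{D}}}(\mathcal{C})\ge\hat{\mathbb{P}}_{\tilde{D}^{(n)}}(\mathcal{C})-R_{\tilde{\delta}}(n)$, then chain together to give~(\ref{ineq:su}), and a union bound over the $6\#\tilde{\mathcal{X}}+2$ events at level $\tilde{\delta}$ controls the total failure probability by $\delta$.

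The main obstacle is exactly this bookkeeping: every concentration inequality mentioning $\hat{y}_{\tilde{x}}$ must be routed through the conditioning on $D^{(n_{\rm e})}$ so that $\hat{y}_{\tilde{x}}$ behaves as a fixed label --- this is what simultaneously avoids a $\#\mathcal{Y}$ factor in $\tilde{\delta}$ and any spurious $R_{\tilde{\delta}}(n_{\rm e})$ term --- while the passage from the test sample back to $\tilde{D}^{(n)}$ must be funneled through the single deterministic set $\mathcal{C}$, so that the $\#\tilde{\mathcal{X}}$ features do not each contribute their own estimation error. Everything else reduces to computations already performed for Theorem~\ref{thm:sp}.
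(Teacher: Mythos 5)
Your proposal is correct and follows essentially the same route as the paper's proof: condition on $D^{(n_{\rm e})}$ so that each $\hat{y}_{\tilde{x}}$ is a fixed label independent of $D^{(n-n_{\rm e})}$, $D^{(N-n)}$ and $D_{\rm test}$, note that the strategy places zero mass on $(\tilde{x},y^*)$ and mass $\frac{n}{N}\hat{\mathbb{P}}_{\tilde{D}^{(n)}}(\tilde{x})$ on $(\tilde{x},\hat{y}_{\tilde{x}})$, compare only these two cells (which is exactly why the union bound has $6\#\tilde{\mathcal{X}}+2$ events rather than a $\#\mathcal{Y}$ factor), and then transfer through a deterministic cracked set exactly as in Theorem~\ref{thm:sp}. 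The bookkeeping of the Hoeffding events and the role of the $-2R_{\tilde{\delta}}(n)$ slack match the paper's argument.
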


\subsection{Signal erasing}

In the previous subsections, we examined how the collective can plant a signal by using a natural strategy, which consists in simply flooding the platform with feature-label pairs of the form $h(x,y) = (g(x), y^*)$, and how it can unplant a signal by estimating the most probable label different from $y^*$. In this section, we study another objective: signal erasing.

We consider a transformation $g: \mathcal{X} \rightarrow \mathcal{X}$. The success of the collective is now defined by:
\[
\hat{S}(n) := \underset{x \sim D_{\rm test}}{\hat{\mathbb{P}}}\left(\hat{f}(g(x)) = \hat{f}(x)\right).
\]
As outlined by \citet{hardt2023collectiveaction}, maximizing $\hat{S}(n)$ aligns with reducing the impact of $g$ on the learning algorithm. The term signal erasing is motivated by the example in tabular data where $g$ preserves certain features while removing others, for instance by setting some features to a fixed value. This effectively removes the impact of the erased features, provided these features are independent of the other ones.

In this part, we will make the following mild assumptions.

\hypothesis{1}{\textit{$\exists \eta > 0, \forall \tilde{x} \in \tilde{\mathcal{X}}, \exists y^*_{\tilde{x}} \in \mathcal{Y}: \forall y' \neq y^*_{\tilde{x}}, \underset{\mathcal{D}}{\mathbb{P}}(\tilde{x},y^*_{\tilde{x}}) > \underset{\mathcal{D}}{\mathbb{P}}(\tilde{x}, y') + \eta$.}}

Intuitively, \Cref{hyp:A1} implies two things. Firstly, each feature $\tilde{x} \in \tilde{\mathcal{X}}$ is sufficiently frequent in the base distribution. Secondly, given a feature $\tilde{x} \in \tilde{\mathcal{X}}$, there exists a label $y^*_{\tilde{x}}$ that is consequently more probable than the other labels in the base distribution.

\hypothesis{2}{\textit{The transformation $g$ is idempotent: $g(g(x))=g(x)$ for all $x \in \mathcal{X}$.}}

To understand \Cref{hyp:A2}, consider data poisoning in image classification. \Cref{hyp:A2} holds in data poisoning strategies where the trigger is a fixed, opaque watermark, as in the case of binary masks \cite{gu2019badnets}, where applying the mask twice is equivalent to applying it once. However, this is not true for strategies like pixel blending \cite{chen2017targetedba}. In contrast, \Cref{hyp:A2} naturally applies to tabular data, where the collective can apply a transformation $g$ to map a feature to a constant value.

Now, we outline a scheme that the collective can use to compute a lower bound. The basic idea is that each member $(x,y)$ of the collective keeps its feature $x$ but changes its label to the most likely label $y^*_{g(x)}$ based on the feature $g(x)$. This approach encourages the platform to predict the same label for both $x$ and $g(x)$, hence erasing the signal. The scheme is the following: first, the collective pools its data to predict the optimal $y^*_{\tilde{x}}$ for each $\tilde{x} \in \tilde{\mathcal{X}}$. Then, the collective applies some strategy $h$ based on the first step to erase the signal. Assuming the collective can compute the optimal label $y^*_{\tilde{x}}$ for each $\tilde{x} \in \tilde{\mathcal{X}}$, the erasure strategy is formally defined as follows:

\begin{definition}[Erasure strategy]
\label{def:se_strat}
We define the erasure strategy as
\[
h(x,y) = (x, y^*_{g(x)}),
\]
where $y^*_{\tilde{x}}$ is given in \Cref{hyp:A1} for each $\tilde{x} \in \tilde{\mathcal{X}}$. 
\end{definition}

\begin{figure*}[t]
    \centering
    \includegraphics[width=\textwidth]{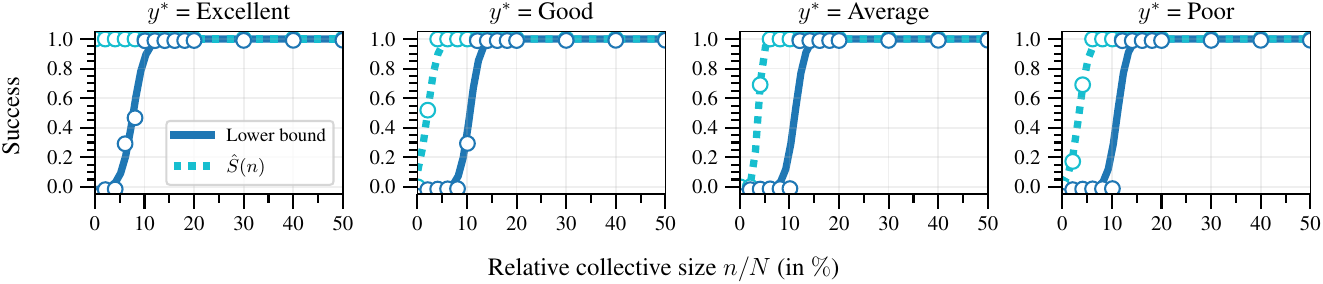}
    \vspace{-1.6em} 
    \caption{\textbf{Signal planting with feature-label strategy.} Comparison of the theoretical lower bound from Theorem \ref{thm:sp} and the true success $\hat{S}(n)$ observed at test time for different values of $n$ and a fixed value of $N = 1,000,000$. For all target labels, the lower bound indicates that approximately 10$\%$ of the total number of agents interacting with the platform is necessary to significantly influence it. In reality, the success observed at test time shows that just under 5$\%$ of members is sufficient, except for the target label $y^*$= Excellent, which is already consistently the most frequent and does not require any planting.}
    \label{fig:signal-planting}
    \vspace{-1em} 
\end{figure*}

We can now state the main result in signal erasing:

\begin{theorem}[Signal erasing lower bound]
\label{thm:se}
    Let $\delta > 0$, and write $\tilde{\delta} := \delta /(2 + \# \tilde{\mathcal{X}}\# \mathcal{Y} + 2\# \mathcal{X} + 2\# \mathcal{X}\# \mathcal{Y} )$. Assume that $\frac{2 \log(1/\tilde{\delta})}{\eta^2} \le n \le N-\frac{2 \log(1/\tilde{\delta})}{\eta^2}$ where $\eta$ is given in \Cref{hyp:A1}. Then, with probability at least $1 - \delta$ (over the draw of the consumers), the collective can compute $y^*_{\tilde{x}}$ for all $\tilde{x} \in \tilde{\mathcal{X}}$ and by playing the erasure strategy it achieves against a classifier that is $\varepsilon$-suboptimal on $\mathcal{X}$: 
  \begin{align}
  \label{ineq:se}
    \begin{split}
        \hat{S}(n) &\ge \underset{x' \sim D^{(n)}}{\hat{\mathbb{P}}} \left[  \frac{n}{N}\left(\underset{ D^{(n)}}{\hat{\mathbb{P}}}(x') - 2R_{\tilde{\delta}}(n)\right)\right.\\
        &\left.-\frac{N-n}{N} \left(\Delta_{x'}^{(n)} + 2R_{\tilde{\delta}}(n) + 2R_{\tilde{\delta}}(N-n)\right) \right.\\
        &-\left.\frac{\varepsilon}{1-\varepsilon} > 0 \right] - R_{\tilde{\delta}}(n) - R_{\tilde{\delta}}(N_{\rm test}),
    \end{split}
  \end{align}
    where $\Delta_{x'}^{(n)} := \underset{y' \in \mathcal{Y}\backslash \{y^*_{g(x')} \}}{\max } \underset{D^{(n)} }{\hat{\mathbb{P}}}(x', y') - \underset{D^{(n)} }{\hat{\mathbb{P}}}(x',y^*_{g(x')} )$.
\end{theorem}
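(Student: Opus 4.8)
The plan is to combine, in the spirit of \citet{hardt2023collectiveaction}, three ingredients: a deterministic ``$\varepsilon$-suboptimality lemma'' turning suboptimality of $\hat f$ into a checkable margin condition, Hoeffding concentration to let the collective recover the labels $y^*_{\tilde x}$ and to replace the population quantities in that lemma by empirical ones, and the idempotence \Cref{hyp:A2} to control $\hat f$ on the image features $g(x)$. I would first isolate the lemma: if $\hat f$ is $\varepsilon$-suboptimal on $\mathcal X$ under $\hat{\mathcal P}$ and $\hat{\mathcal P}(x,y)-\max_{y'\neq y}\hat{\mathcal P}(x,y')>\varepsilon/(1-\varepsilon)$ for some feature $x$ and label $y$, then $\hat f(x)=y$. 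The whole argument then runs on a single ``good event'' $\mathcal E$ on which all the relevant Hoeffding bounds hold at once; the union bound producing $\mathcal E$ is exactly what fixes $\tilde\delta$: two bounds for the final passage to $D^{(n)}$ and $D_{\rm test}$, $\#\tilde{\mathcal X}\#\mathcal Y$ bounds to identify the $y^*_{\tilde x}$, $2\#\mathcal X$ bounds for the feature marginals, and $2\#\mathcal X\#\mathcal Y$ bounds to move the joint frequencies of $D^{(n)}$ and $D^{(N-n)}$ through $\mathcal D$.

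On $\mathcal E$, the first assertion --- the collective can compute $y^*_{\tilde x}$ --- is immediate: the hypothesis $n\ge 2\log(1/\tilde\delta)/\eta^2$ is exactly $2R_{\tilde\delta}(n)\le\eta$, so by \Cref{hyp:A1} the $\eta$-margin of $y^*_{\tilde x}$ in $\mathcal D$ survives the $\pm R_{\tilde\delta}(n)$ fluctuation and $y^*_{\tilde x}$ is the strict empirical argmax of $\hat{\mathbb P}_{D^{(n)}}(\tilde x,\cdot)$; symmetrically $n\le N-2\log(1/\tilde\delta)/\eta^2$ gives $2R_{\tilde\delta}(N-n)\le\eta$, so the same dominance, with margin at least $\eta-2R_{\tilde\delta}(N-n)\ge 0$, holds in $D^{(N-n)}$. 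Next I would write out $\hat{\mathcal P}$ under the erasure strategy: since $h$ keeps every feature and relabels it to $y^*_{g(x)}$, for each $x'\in\mathcal X$ one has $\hat{\mathbb P}_{\tilde D^{(n)}}(x',y^*_{g(x')})=\hat{\mathbb P}_{D^{(n)}}(x')$ and $\hat{\mathbb P}_{\tilde D^{(n)}}(x',y')=0$ for $y'\neq y^*_{g(x')}$, hence $\hat{\mathcal P}(x',y^*_{g(x')})=\tfrac{n}{N}\hat{\mathbb P}_{D^{(n)}}(x')+\tfrac{N-n}{N}\hat{\mathbb P}_{D^{(N-n)}}(x',y^*_{g(x')})$ and $\hat{\mathcal P}(x',y')=\tfrac{N-n}{N}\hat{\mathbb P}_{D^{(N-n)}}(x',y')$ otherwise.

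With these identities I would lower-bound the relevant $\hat{\mathcal P}$-margin at two kinds of features. For an arbitrary feature $x'$, rewriting the unknown $D^{(N-n)}$ frequencies through $\mathcal D$ and then through the $D^{(n)}$ frequencies gives $\hat{\mathcal P}(x',y^*_{g(x')})-\max_{y'\neq y^*_{g(x')}}\hat{\mathcal P}(x',y')\ \ge\ \tfrac{n}{N}\hat{\mathbb P}_{D^{(n)}}(x')-\tfrac{N-n}{N}\bigl(\Delta_{x'}^{(n)}+2R_{\tilde\delta}(n)+2R_{\tilde\delta}(N-n)\bigr)$, so by the suboptimality lemma, positivity of the bracket in \eqref{ineq:se} already forces $\hat f(x')=y^*_{g(x')}$. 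For a fixed point $\tilde x\in\tilde{\mathcal X}$ --- and by \Cref{hyp:A2} every image $g(x')$ is such a fixed point, with $y^*_{g(x')}=y^*_{\tilde x}$ --- both the base population (via \Cref{hyp:A1}) and the collective push $y^*_{\tilde x}$, giving the stronger bound $\hat{\mathcal P}(\tilde x,y^*_{\tilde x})-\max_{y'\neq y^*_{\tilde x}}\hat{\mathcal P}(\tilde x,y')\ \ge\ \tfrac{n}{N}\hat{\mathbb P}_{D^{(n)}}(\tilde x)+\tfrac{N-n}{N}\bigl(\eta-2R_{\tilde\delta}(N-n)\bigr)$, which together with \Cref{hyp:A1} and the range of $n$ forces $\hat f(g(x'))=y^*_{g(x')}$ as soon as $x'$ lies in the good set; hence whenever the bracket in \eqref{ineq:se} is positive one gets $\hat f(g(x'))=y^*_{g(x')}=\hat f(x')$, i.e. success at $x'$. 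Finally I would transfer: the good set (the features making the displayed bracket positive) depends only on $D^{(n)},D^{(N-n)}$, so conditioning on them and applying Hoeffding over the fresh sample $D_{\rm test}$ costs $R_{\tilde\delta}(N_{\rm test})$; to remove the remaining dependence of that set on the very sample $D^{(n)}$ used to weigh it, I would dominate it on $\mathcal E$ by a population-level surrogate set --- this is precisely the purpose of the $-2R_{\tilde\delta}(n)$ sitting inside the first term --- and apply Hoeffding once more for that now-fixed set, costing $R_{\tilde\delta}(n)$; a last union bound over $\mathcal E$ and the two transfers yields probability at least $1-\delta$.

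The step I expect to be the main obstacle is this last one, i.e. arranging the bound so that it is simultaneously a genuine lower bound on $\hat S(n)$ and computable from $D^{(n)}$ alone. Two points need care: the coupling between a test point's feature $x'$ and its image $g(x')$ (one scalar inequality must pin down $\hat f$ at both places, so the fixed-point margin bound has to be quantitatively reconciled with the $\varepsilon/(1-\varepsilon)$ threshold coming from the bracket), and the $D^{(n)}\!\to\!\mathcal D\!\to\!D_{\rm test}$ passage, which cannot be a naive concentration step because the event set is itself a function of $D^{(n)}$; getting the built-in slacks --- the $-2R_{\tilde\delta}(n)$ in the first term, and the $+2R_{\tilde\delta}(n)+2R_{\tilde\delta}(N-n)$ inside $\Delta_{x'}^{(n)}$ --- to exactly absorb these fluctuations is the delicate bookkeeping.
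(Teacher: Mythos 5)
Your proposal is correct and follows essentially the same route as the paper's proof: the same union bound over the same $2+\#\tilde{\mathcal X}\#\mathcal Y+2\#\mathcal X+2\#\mathcal X\#\mathcal Y$ Hoeffding events, the same use of \Cref{hyp:A1} with the two sample-size conditions to recover $y^*_{\tilde x}$ from $D^{(n)}$ and to pin down the argmax in $D^{(N-n)}$, the same idempotence argument forcing $\hat f(g(x'))=y^*_{g(x')}$, and the same margin-plus-transfer bookkeeping ($D^{(N-n)}\to\mathcal D\to D^{(n)}$ for the bracket, then $\mathcal D\to D^{(n)}$ and $\mathcal D\to D_{\rm test}$ for the outer probability). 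The delicate points you flag at the end are exactly the ones the paper's proof resolves via its inequalities (9)--(16), with the built-in slacks absorbing the fluctuations in the direction that only shrinks the good set.
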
 

In signal erasing, just like in signal unplanting, the collective leverages its own data to compute the strategy $h$. However, the technique differs. In signal unplanting, the collective uses a fraction of its members to estimate the optimal label to play. Whereas in signal erasing, the collective, provided that it is sufficiently large, utilizes all of its data to compute the most likely label given a feature $\tilde{x}$, which exists under \Cref{hyp:A1}.

\section{Experimental Evaluation}
\label{sec:experimental-evaluation}

In our experiments, we simulate a platform that collects data on vehicles, where each sample represents a car. The features of each car include characteristics such as \textit{Model Type}, \textit{Fuel Type}, and \textit{Country of Manufacture}. The labels assigned to each vehicle reflect the car’s evaluation, categorized into four classes: Excellent, Good, Average, or Poor. Further details on the dataset and the specific parameters used in the experiments can be found in Appendix \ref{app:addition-details-experiment}.

We consider a scenario where a collective seeks to influence the platform by lobbying against a particular category of vehicles, specifically SUVs with specific features. The collective defines a signal set $\tilde{\mathcal{X}}$ through a transformation~$g$ fixing all feature values except \textit{Country of Manufacture}. They may want to plant a signal targeting a label $y^*$ = Poor. They might also aim to unplant signals, specifically working to associate elements of $\tilde{\mathcal{X}}$ with labels $y \neq$ Excellent.

\subsection{Signal planting}

In Figure \ref{fig:signal-planting}, we plot the lower bounds from Theorem \ref{thm:sp} for various values of $n$ and compare them to the true success $\hat{S}(n)$ observed at test time. We fit sigmoid functions to interpolate the obtained values. The lower bounds are indeed lower than the success, and the gap between the two is not excessively large. This gap could potentially be narrowed using more advanced statistical inference methods. Interestingly, even though in practice the label $y^*$ =~Excellent is already the most frequent label for every element of the signal set $\tilde{\mathcal{X}}$ and does not technically need to be planted---i.e., $\hat{S}(n) = 1$ for all $n$---the collective is still not guaranteed to have influence over the signal for small values of $n$. This stems from statistical uncertainties, which our framework highlights, causing the collective to consistently overestimate the number of agents needed for a given success.

\begin{figure}[tb]
    \centering
    \includegraphics[width=\columnwidth]{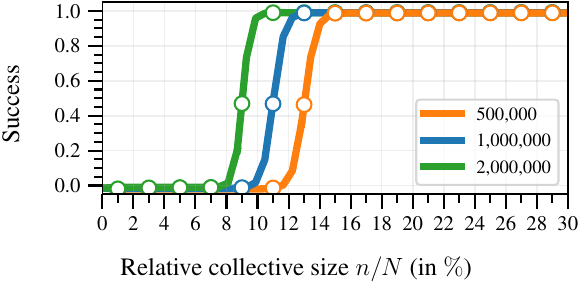}
    \vspace{-1.6em} 
    \caption{\textbf{Signal planting with feature-label strategy.} Lower bound from Theorem \ref{thm:sp} with $y^*$ = Poor for different values of $n$ with $N=500,000$, $N=1,000,000$, and $N=2,000,000$.}
    \label{fig:signal-planting-N}
    \vspace{-1em}
\end{figure}

\begin{figure*}[t]
    \centering
    \includegraphics[width=\textwidth]{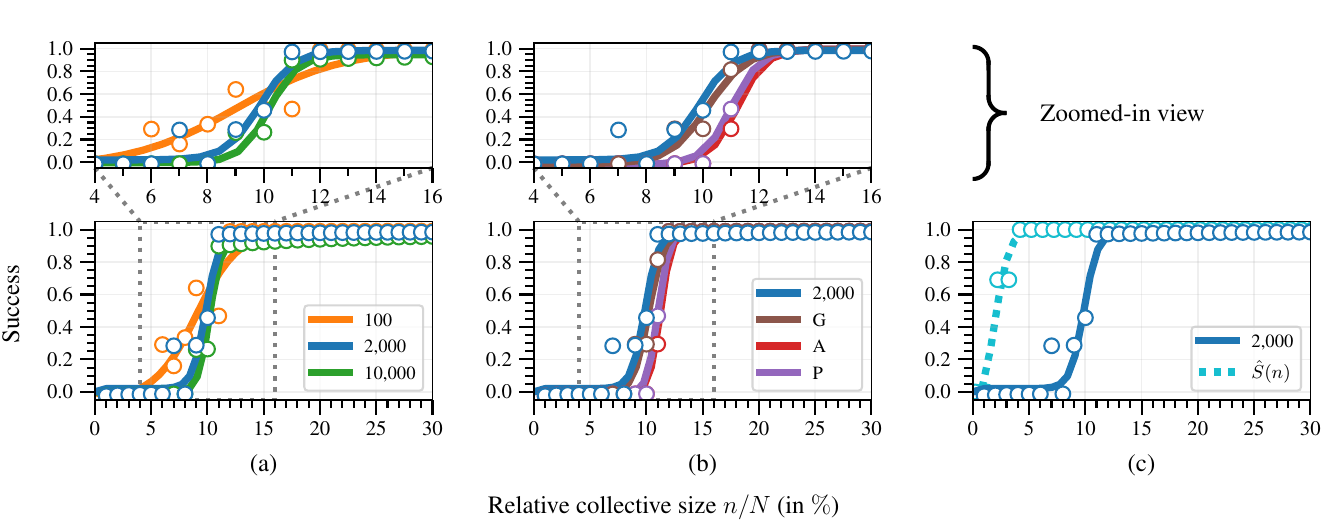} 
    \vspace{-1.6em} 
    \caption{\textbf{Signal unplanting.} (a) Comparison of lower bounds from Theorem \ref{thm:su} for different values of $n_{\rm e}$. (b) Comparison between the adaptive strategy with  $n_{\rm e}= 2,000$ and naive planting strategies targeting labels $y^* \in \{ \text{Good (G), Average (A), Poor (P)}\}$. (c) Comparison of the lower bound achieved by the adaptive strategy with $n_{\rm e}= 2,000$ and the actual success $\hat{S}(n)$ observed at test time.}
    \label{fig:signal-unplanting}
    \vspace{-1em} 
\end{figure*}

Figure~\ref{fig:signal-planting-N} shows how the lower bound evolves with the total number of individuals $N$. When the fraction $n/N$ is held fixed, increasing $N$ leads to a larger collective size $n$. This, in turn, improves the collective’s ability to estimate key quantities, as reflected by the decreasing error terms $R_{\tilde{\delta}}(n)$. This suggests that platforms interacting with large user bases are more exposed to collectives altering their data. While this observation is specific to signal planting, it is even more relevant when optimal strategies need to be estimated, as in signal unplanting.

\subsection{Signal unplanting}

We now focus on signal unplanting with a target $y \neq$ Excellent, where the collective aims for the platform to predict a label other than Excellent for samples in the signal set $\tilde{\mathcal{X}}$.

The lower bound obtained in Theorem \ref{thm:su} depends on $n_{\rm e}$, the size of the sub-collective used to determine the strategy~$h$. As shown in Figure \ref{fig:signal-unplanting} (a), $n_{\rm e}$ involves a trade-off: small values lead to erratic strategy estimates and weaker bounds, while overly large values increase the $R_{\tilde{\delta}}(n-n_{\rm e})$ term, also weakening the bound. A good balance is achieved at $n_{\rm e} = 2,000$. 

Figure \ref{fig:signal-unplanting} (b) compares this adaptive strategy with $n_{\rm e} = 2,000$ to naive strategies planting labels $y^* \in \{\text{Good, Average, Poor}\}$. The adaptive strategy consistently outperforms the naive ones by providing a higher lower bound, demonstrating the benefits of tailoring the strategy based on the data. However, it is worth noting that the naive strategy of planting the signal with $y^*$ = Good performs well for this dataset. This is due to the fact that most elements in the signal set $\tilde{\mathcal{X}}$ have Good as the second most likely label after Excellent.

In Figure \ref{fig:signal-unplanting} (c), we compare the lower bound with $n_{\rm e} = 2,000$ and the actual success $\hat{S}(n)$ achieved at test time. The lower bound suggests that the collective would need to represent around $10\%$ of the agents interacting with the platform to have a significant impact, while in practice, only about $3\%$ was sufficient.

\subsection{Comparative analysis}
\label{ss:comparative_analysis}

\begin{figure}[h]
    \centering
    \includegraphics[width=\columnwidth]{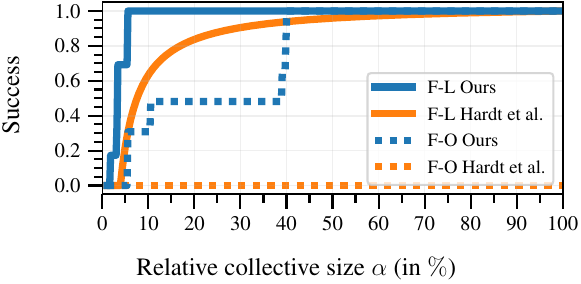}
    \vspace{-1.6em} 
    \caption{\textbf{Signal planting.} Comparison of signal planting lower bounds with target $y^* =$ Poor using feature-label (F-L) and feature-only (F-O) strategies. Our bounds in the infinite data regime are compared to bounds from \citet{hardt2023collectiveaction}, when $\varepsilon=0$.}
    \label{fig:comparison-prior-bounds}
    \vspace{-1em} 
\end{figure}

Our finite-sample framework differs from that presented by \citet{hardt2023collectiveaction}, which focuses on a population-level analysis. The connection is that the limit of our framework in the infinite data regime boils down to that of \citet{hardt2023collectiveaction}. Specifically, for $\alpha \in (0,1)$. let $n, N, N_{\rm test} \rightarrow \infty$ such that $n/N \rightarrow \alpha$. In this case, our statistical algorithmic collective action framework simplifies to that of algorithmic collective action. In Figure \ref{fig:comparison-prior-bounds}, we compare the bounds we obtain under the infinite data regime with those of \citet{hardt2023collectiveaction}. The bounds presented in our work are tighter than those obtained by \citet{hardt2023collectiveaction}. We provide a proof and additional remarks in Appendix \ref{app:infinite-data-regime}.

Interestingly, in the case of a discrete signal set, as used in our experiments, our lower bounds take on a \emph{staircase} shape rather than a smooth sigmoid. This mirrors the shape of the curves observed in signal unplanting in Figure \ref{fig:signal-unplanting}, which would also appear in Figure \ref{fig:signal-planting} and Figure \ref{fig:signal-planting-N} if we used finer increments of $n$. In contrast, the bounds from \citet{hardt2023collectiveaction} do not capture this phenomenon.

\section{Discussion}

In this work, we introduced a framework where collectives aiming to influence a platform can leverage their local information by pooling their data. Our approach captures the key desideratum that collectives may not only want to observe the outcomes of their actions at test time but also anticipate lower bounds on their success. Moreover, it allows the collective to implement practical strategies based on parameters they do not directly observe, by using statistical estimation. The ability to anticipate outcomes and make informed decisions based on pooled data represents an advancement in understanding how collectives can interact with and influence platforms. However, there is room for further exploration. 

We used Hoeffding's inequality as a proof of concept, but it is worth studying improved concentration inequalities for obtaining estimates of success for collectives. It is also possible to reverse the concentration inequalities to derive upper bounds on success, but the challenge lies in the fact that these upper bounds are often trivial. In ongoing work, we are exploiting recent developments in concentration inequalities due to~\citet{howardEtAl}. 

We focused on classification, but extensions to other objectives, such as regression, would be useful. We also assumed that the feature space $\mathcal{X}$ is finite to use a limited number of events in the union bounds. It would be natural to remove this assumption by using a covering of $\tilde{\mathcal{X}}$, provided it is compact, to derive more general bounds for signal planting and unplanting. 

Finally, a key assumption in collective action is that individuals are identically distributed. However, those who join a collective are often distributed differently from the general population---e.g., a collective against SUVs is likely to have fewer SUV-related data points and more data on smaller vehicles. Extending the framework to account for a heterogeneous population is an important direction for future work.

\section*{Acknowledgements}

The authors would like to thank Eugène Berta, Sacha Braun, Yurong Chen, and David Holzmüller for their helpful comments and suggestions. 

Funded by the European Union (ERC-2022-SYG-OCEAN-101071601). Views and opinions expressed are however those of the author(s) only and do not necessarily reflect those of the European Union or the European Research Council Executive Agency. Neither the European Union nor the granting authority can be held responsible for them. 

This publication is part of the Chair ``Markets and Learning", supported by Air Liquide, BNP PARIBAS ASSET MANAGEMENT Europe, EDF, Orange and SNCF, sponsors of the Inria Foundation. 

This work has also received support from the French government, managed by the National Research Agency, under the France 2030 program with the reference ``PR[AI]RIE-PSAI" (ANR-23-IACL-0008).

\section*{Impact Statement}

This paper provides a theoretical and algorithmic foundation for studying collective influence, shedding light on the computations and coordination strategies necessary for such actions. By formalizing these mechanisms, we contribute to a deeper understanding of the risks and opportunities associated with collective interventions. This knowledge is essential for designing more robust, transparent, and socially aligned learning systems that account for strategic behavior and ensure fairer, more predictable outcomes in algorithmic decision-making.

\bibliography{references}
\bibliographystyle{icml2025}

\newpage
\appendix
\onecolumn

\section{Additional Details on Experiments}
\label{app:addition-details-experiment}

We generate a dataset of 3,000,000 instances. Each instance represents a car, characterized by multiple categorical features that capture key aspects of vehicle design, performance, and manufacturing.

The features included in this synthetic dataset are as follows:
\begin{itemize}
    \item \textit{Model Type}: the type of vehicle, categorized as Sedan, SUV, Coupe, Hatchback, Convertible, Wagon, Minivan, or Truck.
    \item \textit{Fuel Type}: the fuel used by the vehicle, which can be Gasoline, Diesel, Electric, or Hybrid.
    \item \textit{Transmission Type}: the type of transmission, classified as Manual, Automatic, or CVT.
    \item \textit{Drive Type}: the drive configuration of the vehicle, identified as FWD (Front-Wheel Drive), RWD (Rear-Wheel Drive), or AWD (All-Wheel Drive).
    \item \textit{Safety Rating}: the safety rating of the vehicle, rated from 1 star to 5 stars.
    \item \textit{Interior Material}: the material used for the vehicle's interior, which can be Cloth, Leather, or Synthetic.
    \item \textit{Infotainment System}: the level of the infotainment system, ranging from Basic, Advanced, Premium, or None.
    \item \textit{Country of Manufacture}: the country where the vehicle was manufactured, that we will denote by C1, C2, C3, C4, and C5.
    \item \textit{Warranty Length}: the length of the vehicle's warranty, available in options of 3 years, 5 years, 7 years, or 10 years.
    \item \textit{Number of Doors}: the number of doors on the vehicle, which can be 2, 4, or 5.
    \item \textit{Number of Seats}: The seating capacity of the vehicle, with options for 2, 4, 5, or 7 seats.
    \item \textit{Air Conditioning}: indicates whether the vehicle is equipped with air conditioning (Yes or No).
    \item \textit{Navigation System}: the level of the navigation system, which can be None, Basic, or Advanced.
    \item \textit{Tire Type}: the type of tires used, categorized as All-Season, Summer, or Winter.
    \item \textit{Sunroof}: indicates whether the vehicle has a sunroof (Yes or No).
    \item \textit{Sound System}: the quality of the sound system, which can be Standard, Premium, High-end, or None.
    \item \textit{Cruise Control}: indicates whether the vehicle is equipped with cruise control (Yes or No).
    \item \textit{Bluetooth Connectivity}: indicates whether the vehicle has Bluetooth connectivity (Yes or No).
\end{itemize}

Additionally, each car is assigned a \textit{Car Evaluation} label based on a scoring system that considers various factors such as safety rating, fuel type, warranty length, and others. The possible evaluation outcomes are classified into four categories: Excellent, Good, Average, and Poor.

We generate separate consumer datasets, sampled without replacement from this base dataset. Unless otherwise specified, we choose $N = 1,000,000$ for the training set and $N_{\rm test} = 100,000$ for the test set. In all the experiments, we set $\delta=0.05$ and $\varepsilon=0$. 

In our experiments, the collective attempts to influence the platform by targeting features with specific characteristics defined through the transformation $g$: \textit{Model Type} = SUV, \textit{Fuel Type} = Diesel, \textit{Transmission Type} = Manual, \textit{Drive Type} = RWD, \textit{Safety Rating} = 4 stars, \textit{Interior Material} = Synthetic, \textit{Infotainment System} = Premium, \textit{Warranty Length} = 10 years, \textit{Number of Doors} = 5, \textit{Number of Seats} = 5, \textit{Air Conditioning} = Yes, \textit{Navigation System} = Advanced, \textit{Tire Type} = All-Season, \textit{Sunroof} = Yes, \textit{Sound System} = Premium, \textit{Cruise Control} = Yes, and \textit{Bluetooth Connectivity} = Yes. In the dataset containing 3 million samples, the signal set $\tilde{\mathcal{X}}$ induced by $g$ comprises exactly 208,871 samples, which represents just under 7\% of the entire training set. Table \ref{tab:distribution} provides details on the label frequencies within the signal set.

\begin{table}[ht]
\centering
\caption{Distribution of labels for elements in $\tilde{\mathcal{X}}$, characterized by their \textit{Country of Manufacture.}}
\vskip 0.15in
\label{tab:distribution}
\begin{tabular}{llr}
\toprule
\textbf{Country of Manufacture} & \textbf{Label} & \textbf{Sample Count} \\
\midrule
\multirow{4}{*}{C1}         & Excellent & 18410 \\
                             & Good      & 9228 \\
                             & Average   & 0 \\
                             & Poor      & 1471 \\
\midrule
\multirow{4}{*}{C2} & Excellent & 18504 \\
                             & Good      & 9214 \\
                             & Average   & 0 \\
                             & Poor      & 1461 \\
\midrule
\multirow{4}{*}{C3}       & Excellent & 58083 \\
                             & Good      & 29170 \\
                             & Average   & 0 \\
                             & Poor      & 4619 \\
\midrule
\multirow{4}{*}{C4}     & Excellent & 17491 \\
                             & Good      & 0 \\
                             & Average   & 2946 \\
                             & Poor      & 8911 \\
\midrule
\multirow{4}{*}{C5}       & Excellent & 18589 \\
                             & Good      & 9290 \\
                             & Average   & 0 \\
                             & Poor      & 1484 \\
\midrule
\multirow{4}{*}{Total}       & Excellent & 131077 \\
                             & Good      & 56902 \\
                             & Average   & 2946 \\
                             & Poor      & 17946 \\
\bottomrule
\end{tabular}
\end{table}

For further details on the dataset composition, we refer to the code available at: \url{https://github.com/GauthierE/statistical-collusion}.

\newpage
\section{Algorithms}
\label{app:algo}

\begin{algorithm}[H]
\caption{Signal planting lower bound -- feature-label strategy}
\label{alg:sp}
\begin{algorithmic}[1]
    \STATE Input: $\mathcal{X}, \mathcal{Y}, N, N_{\rm test}, n < N,  D^{(n)}, g, y^*, \delta > 0, \varepsilon > 0$
    \STATE Define $\tilde{\mathcal{X}} := \left\{g(x) \mid x \in \mathcal{X}\right\}$
    \STATE Observe $D^{(n)}$ and compute $\Delta_{\tilde{x}}^{(n)}$ for every $\tilde{x} \in \tilde{\mathcal{X}}$
    \STATE Define $h: (x,y) \mapsto (g(x),y^*)$
    \STATE Compute $\tilde{D}^{(n)}$ by applying $h$ to all samples in $D^{(n)}$
    \STATE Compute $\underset{\tilde{D}^{(n)}}{\hat{\mathbb{P}}}(\tilde{x})$ for every $\tilde{x} \in \tilde{\mathcal{X}}$
    \STATE Define $\tilde{\delta} := \delta /(2 + 2\# \tilde{\mathcal{X}} + 2\# \tilde{\mathcal{X}}\# \mathcal{Y})$
    \STATE Compute $R_{\tilde{\delta}}(n)$, $R_{\tilde{\delta}}(N-n)$, and $R_{\tilde{\delta}}(N_{\rm test})$ 
    \STATE Compute and return:\\ $\underset{\tilde{x} \sim \tilde{D}^{(n)}}{\hat{\mathbb{P}}} \left[  \frac{n}{N}\left(\underset{\tilde{D}^{(n)}}{\hat{\mathbb{P}}}(\tilde{x}) - 2R_{\tilde{\delta}}(n)\right) -\frac{N-n}{N} \left(\Delta_{\tilde{x}}^{(n)} + 2R_{\tilde{\delta}}(n) + 2R_{\tilde{\delta}}(N-n)\right) - \frac{\varepsilon}{1-\varepsilon}> 0 \right] - R_{\tilde{\delta}}(n) - R_{\tilde{\delta}}(N_{\rm test})$
\end{algorithmic}
\end{algorithm}

\begin{algorithm}[H]
\caption{Signal planting lower bound -- feature-only strategy}
\label{alg:sp_fo}
\begin{algorithmic}[1]
    \STATE Input: $\mathcal{X}, \mathcal{Y}, N, N_{\rm test}, n < N,  D^{(n)}, g, y^*, \delta > 0, \varepsilon > 0$
    \STATE Define $\tilde{\mathcal{X}} := \left\{g(x) \mid x \in \mathcal{X}\right\}$ and select some $x_0 \notin \tilde{\mathcal{X}}$
    \STATE Observe $D^{(n)}$ and compute $\Delta_{\tilde{x}}^{(n)}$ for every $\tilde{x} \in \tilde{\mathcal{X}}$
    \STATE Define $h: (x,y) \mapsto \begin{cases} 
    (g(x), y^*) & \text{if } y = y^*, \\
    (x_0, y) & \text{otherwise}
    \end{cases}$
    \STATE Compute $\tilde{D}^{(n)}$ by applying $h$ to all samples in $D^{(n)}$
    \STATE Compute $\underset{\tilde{D}^{(n)}}{\hat{\mathbb{P}}}(\tilde{x}, y^*)$ for every $\tilde{x} \in \tilde{\mathcal{X}}$
    \STATE Define $\tilde{\delta} := \delta /(2 + 2\# \tilde{\mathcal{X}} + 2\# \tilde{\mathcal{X}}\# \mathcal{Y})$
    \STATE Compute $R_{\tilde{\delta}}(n)$, $R_{\tilde{\delta}}(N-n)$, and $R_{\tilde{\delta}}(N_{\rm test})$ 
    \STATE Compute and return:\\ $\underset{x' \sim D^{(n)}}{\hat{\mathbb{P}}} \left[  \frac{n}{N}\left(\underset{\tilde{D}^{(n)} }{\hat{\mathbb{P}}}(g(x'), y^*) - 2R_{\tilde{\delta}}(n)\right) -\frac{N-n}{N} \left(\Delta_{g(x')}^{(n)} + 2R_{\tilde{\delta}}(n) + 2R_{\tilde{\delta}}(N-n)\right)- \frac{\varepsilon}{1-\varepsilon}> 0 \right] - R_{\tilde{\delta}}(n) - R_{\tilde{\delta}}(N_{\rm test})$
\end{algorithmic}
\end{algorithm}

\begin{algorithm}[H]
\caption{Signal unplanting lower bound -- adaptive strategy}
\label{alg:su}
\begin{algorithmic}[1]
    \STATE Input: $\mathcal{X}, \mathcal{Y}, N, N_{\rm test}, n < N, n_{\rm e} < n, D^{(n_{\rm e})}, D^{(n-n_{\rm e})}, g, y^*, \delta > 0, \varepsilon > 0$
    \STATE Define $\tilde{\mathcal{X}} := \left\{g(x) \mid x \in \mathcal{X}\right\}$
    \STATE Observe $D^{(n_{\rm e})}$ and compute $\hat{y}_{\tilde{x}} := \underset{y' \in \mathcal{Y}\backslash\{y^*\}}{\text{argmax }} \underset{D^{(n_{\rm e})} }{\hat{\mathbb{P}}}(\tilde{x}, y' )$ for every $\tilde{x} \in \tilde{\mathcal{X}}$
    \STATE Observe $D^{(n-n_{\rm e})}$ and compute $\Delta_{\tilde{x}}^{(n-n_{\rm e})}$ for every $\tilde{x} \in \tilde{\mathcal{X}}$
    \STATE Define $h: (x,y) \mapsto (g(x), \hat{y}_{g(x)})$
    \STATE Compute $\tilde{D}^{(n)}$ by applying $h$ to all samples in $D^{(n)}$ defined as the concatenation of $D^{(n_{\rm e})}$  and $D^{(n-n_{\rm e})}$
    \STATE Compute $\underset{\tilde{D}^{(n)}}{\hat{\mathbb{P}}}(\tilde{x})$ for every $\tilde{x} \in \tilde{\mathcal{X}}$
    \STATE Define $\tilde{\delta} := \delta /(2 + 6\# \tilde{\mathcal{X}} )$
    \STATE Compute $R_{\tilde{\delta}}(n)$, $R_{\tilde{\delta}}(n-n_{\rm e})$, $R_{\tilde{\delta}}(N-n)$, and $R_{\tilde{\delta}}(N_{\rm test})$ 
    \STATE Compute and return:\\ $\underset{\tilde{x} \sim \tilde{D}^{(n)}}{\hat{\mathbb{P}}} \left[  \frac{n}{N}\left(\underset{ \tilde{D}^{(n)}}{\hat{\mathbb{P}}}(\tilde{x}) - 2R_{\tilde{\delta}}(n)\right) -\frac{N-n}{N} \left(\Delta_{\tilde{x}}^{(n-n_{\rm e})} + 2R_{\tilde{\delta}}(n-n_{\rm e}) + 2R_{\tilde{\delta}}(N-n)\right) - \frac{\varepsilon}{1-\varepsilon}> 0 \right] - R_{\tilde{\delta}}(n) - R_{\tilde{\delta}}(N_{\rm test})$
\end{algorithmic}
\end{algorithm}

\begin{algorithm}[H]
\caption{Signal erasing lower bound -- under \Cref{hyp:A1} and \Cref{hyp:A2}}
\label{alg:se}
\begin{algorithmic}[1]
    \STATE Input: $\mathcal{X}, \mathcal{Y}, N, N_{\rm test}, n < N, D^{(n)}, g, \delta > 0, \varepsilon > 0$
    \STATE Define $\tilde{\mathcal{X}} := \left\{g(x) \mid x \in \mathcal{X}\right\}$
    \STATE Observe $D^{(n)}$ and compute $y^*_{\tilde{x}} = \underset{y' \in \mathcal{Y}}{\text{argmax}} \underset{D^{(n)}}{\hat{\mathbb{P}}}(\tilde{x}, y')$ for every $\tilde{x} \in \tilde{\mathcal{X}}$ and $\Delta_{x'}^{(n)}$ for every $x' \in \mathcal{X}$
    \STATE Compute $\underset{D^{(n)}}{\hat{\mathbb{P}}}(x')$ for every $x' \in \mathcal{X}$
    \STATE Define $\tilde{\delta} := \delta /(2 + \# \tilde{\mathcal{X}}\# \mathcal{Y} + 2\# \mathcal{X} + 2\# \mathcal{X}\# \mathcal{Y} )$
    \STATE Compute $R_{\tilde{\delta}}(n)$, $R_{\tilde{\delta}}(N-n)$, and $R_{\tilde{\delta}}(N_{\rm test})$ 
    \STATE Compute and return:\\ $\underset{x' \sim D^{(n)}}{\hat{\mathbb{P}}} \left[  \frac{n}{N}\left(\underset{ D^{(n)}}{\hat{\mathbb{P}}}(x') - 2R_{\tilde{\delta}}(n)\right)-\frac{N-n}{N} \left(\Delta_{x'}^{(n)} + 2R_{\tilde{\delta}}(n) + 2R_{\tilde{\delta}}(N-n)\right) -\frac{\varepsilon}{1-\varepsilon} > 0 \right] - R_{\tilde{\delta}}(n) - R_{\tilde{\delta}}(N_{\rm test})$
\end{algorithmic}
\end{algorithm}

Note that Algorithm \ref{alg:se} yields a valid lower bound if $\frac{2 \log(1/\tilde{\delta})}{\eta^2} \le n \le N-\frac{2 \log(1/\tilde{\delta})}{\eta^2}$ where $\eta$ is given in \Cref{hyp:A1}.

\section{Additional Experiments}
\label{app:additional-exp}

\begin{figure}[H]
    \centering
    \includegraphics[width=\textwidth]{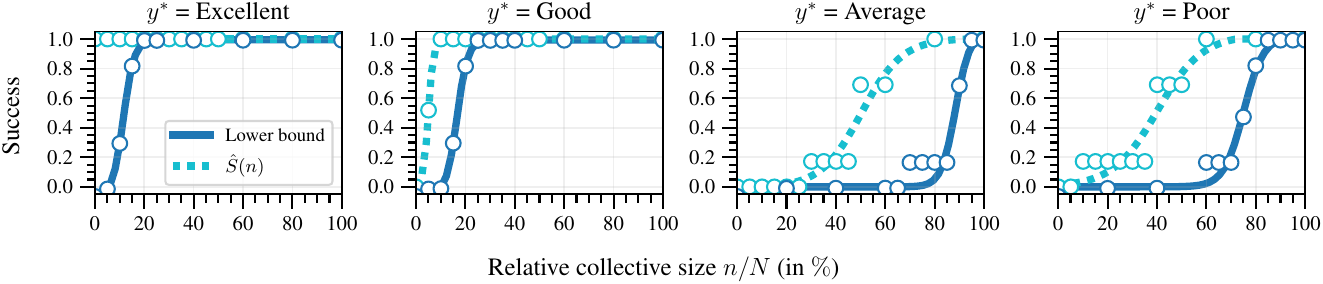}
    \vspace{-1.6em} 
    \caption{\textbf{Signal planting with feature-only strategy.} Comparison of the theoretical lower bound from Theorem \ref{thm:sp_fo} and the true success $\hat{S}(n)$ observed at test time for different values of $n$ and a fixed value of $N = 1,000,000$.}
    \label{fig:signal-planting-fo}
    \vspace{-1em} 
\end{figure}

For the feature-only signal planting strategy, we selected $x_0$ constant equal to: \textit{Model Type} = Sedan, \textit{Fuel Type} = Diesel, \textit{Transmission Type} = Automatic, \textit{Drive Type} = RWD, \textit{Safety Rating} = 1 star, \textit{Interior Material} = Synthetic, \textit{Infotainment System} = Premium, \textit{Warranty Length} = 7 years, \textit{Number of Doors} = 5, \textit{Number of Seats} = 5, \textit{Air Conditioning} = Yes, \textit{Navigation System} = Advanced, \textit{Tire Type} = All-Season, \textit{Sunroof} = No, \textit{Sound System} = Premium, \textit{Cruise Control} = No, and \textit{Bluetooth Connectivity} = No.

In Figure \ref{fig:signal-planting-fo}, we can see that both the lower bounds and the actual success rates at test time are lower compared to the feature-label strategy observed in Figure \ref{fig:signal-planting}. This is expected since the collective has less leverage with the feature-only strategy, as it cannot modify its labels. Specifically, for planting $y^*=$ Average and $y^*=$ Poor, it appears that only collectives representing the majority of agents interacting with the learning platform are likely to have a significant impact, while the lower bounds remain uninformative unless the collective represents nearly all agents.

\section{Lemmas}

\begin{lemma}[Hoeffding's Inequality]
\label{lma:hoeffding}
Let \( X_1, X_2, \ldots, X_n \) be independent random variables such that \( a_i \leq X_i \leq b_i \) for all \( i \). Let \( \overline{X} = \frac{1}{n} \sum_{i=1}^n X_i \). Then for any \(t > 0\),
\[
\underset{X_1,...,X_n}{\mathbb{P}}\left(\overline{X} - \mathbb{E}[\overline{X}] \geq t \right) \leq \exp\left( -\frac{2n^2t^2}{\sum_{i=1}^n (b_i - a_i)^2} \right).
\]
\end{lemma}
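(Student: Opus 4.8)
The plan is to use the Cramér–Chernoff exponential-moment method. Writing $Y_i := X_i - \mathbb{E}[X_i]$ and $S := \sum_{i=1}^n Y_i$, note that $\overline{X} - \mathbb{E}[\overline{X}] = S/n$, so the target probability equals $\mathbb{P}(S \geq nt)$. For any $s > 0$ the event $\{S \geq nt\}$ coincides with $\{e^{sS} \geq e^{snt}\}$, and Markov's inequality gives
\[
\mathbb{P}(S \geq nt) \leq e^{-snt}\, \mathbb{E}\!\left[e^{sS}\right].
\]
Since the $X_i$, and hence the centered variables $Y_i$, are independent, the moment generating function factorizes as $\mathbb{E}[e^{sS}] = \prod_{i=1}^n \mathbb{E}[e^{sY_i}]$. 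The problem thus reduces to controlling the moment generating function of each centered, bounded variable $Y_i$, which lies in the interval $[a_i - \mathbb{E}[X_i],\, b_i - \mathbb{E}[X_i]]$ of width $b_i - a_i$.

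The crux is Hoeffding's Lemma: for a zero-mean variable $Y$ with $Y \in [a,b]$ almost surely, $\mathbb{E}[e^{sY}] \leq \exp\!\big(s^2(b-a)^2/8\big)$. I would establish this from the convexity of $y \mapsto e^{sy}$ on $[a,b]$, which yields the pointwise bound $e^{sy} \leq \frac{b-y}{b-a}e^{sa} + \frac{y-a}{b-a}e^{sb}$; taking expectations and using $\mathbb{E}[Y]=0$ eliminates the linear term and leaves $\mathbb{E}[e^{sY}] \leq \frac{b}{b-a}e^{sa} - \frac{a}{b-a}e^{sb}$. Letting $\varphi(s)$ denote the logarithm of this upper bound, I would check $\varphi(0)=0$ and $\varphi'(0)=0$, and then show $\varphi''(s) \leq (b-a)^2/4$ for all $s$ by recognizing $\varphi''(s)$ as $(b-a)^2$ times the variance of a Bernoulli-type distribution arising from the exponential tilt, which by Popoviciu's inequality (or a direct $p(1-p) \leq 1/4$ argument) is at most $(b-a)^2/4$. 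A second-order Taylor expansion with remainder then gives $\varphi(s) \leq s^2(b-a)^2/8$, as claimed. I expect this lemma—specifically the uniform bound on $\varphi''$—to be the main obstacle, since the remaining steps are routine bookkeeping.

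With the lemma in hand, combining the factorization with $\mathbb{E}[e^{sY_i}] \leq \exp\!\big(s^2(b_i-a_i)^2/8\big)$ yields
\[
\mathbb{P}(S \geq nt) \leq \exp\!\left(-snt + \tfrac{s^2}{8}\sum_{i=1}^n (b_i-a_i)^2\right).
\]
Finally I would optimize the exponent over $s > 0$. Writing $C := \sum_{i=1}^n (b_i - a_i)^2$, the quadratic $-snt + s^2 C/8$ is minimized at $s^\star = 4nt/C$, and substituting back gives exponent $-2n^2t^2/C$. This produces exactly $\mathbb{P}(\overline{X} - \mathbb{E}[\overline{X}] \geq t) \leq \exp\!\big(-2n^2 t^2 / \sum_{i=1}^n (b_i - a_i)^2\big)$, completing the proof.
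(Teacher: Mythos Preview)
Your proof is the standard Cram\'er--Chernoff argument for Hoeffding's inequality and is correct as outlined; the only point to be careful about is the degenerate case $a=b$ (i.e., $Y$ constant zero), where the convex-combination bound is vacuous but the lemma holds trivially. The paper, however, does not actually prove this lemma: it is stated in the appendix as a classical fact and then used as a tool, so there is no paper proof to compare against.
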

Note that by applying Lemma \ref{lma:hoeffding} to the random variables $-X_i$ we obtain that:
\[
\underset{X_1,...,X_n}{\mathbb{P}}\left(\overline{X} - \mathbb{E}[\overline{X}] \leq -t \right) \leq \exp\left( -\frac{2n^2t^2}{\sum_{i=1}^n (b_i - a_i)^2} \right).
\]

\begin{lemma}[\cite{hardt2023collectiveaction}, Lemma B.1]
\label{lma:tv_epsilon}
Suppose that \(\mathcal{P}\) and \(\mathcal{P}'\) are two distributions such that \( \text{TV}(\mathcal{P}, \mathcal{P}') \leq \varepsilon \). Take any two events \( E_1 \) and \( E_2 \) measurable under \(\mathcal{P}\) and \(\mathcal{P}'\). If \( \mathcal{P}(E_1) > \mathcal{P}(E_2) + \frac{\varepsilon}{1 - \varepsilon} \), then \( \mathcal{P}'(E_1) > \mathcal{P}'(E_2) \).
\end{lemma}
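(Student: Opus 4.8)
The plan is to establish the contrapositive: assuming $\mathcal{P}'(E_1) \le \mathcal{P}'(E_2)$, I will show $\mathcal{P}(E_1) \le \mathcal{P}(E_2) + \frac{\varepsilon}{1-\varepsilon}$. The main tool is the maximal-coupling (overlap) decomposition of two distributions lying at a given total-variation distance. Writing $\beta := TV(\mathcal{P},\mathcal{P}') \le \varepsilon$, there exist probability measures $\mu,\nu,\nu'$ with $\nu \perp \nu'$ (disjoint supports) such that $\mathcal{P} = (1-\beta)\mu + \beta\nu$ and $\mathcal{P}' = (1-\beta)\mu + \beta\nu'$; here $(1-\beta)\mu$ is the shared (pointwise-minimum) mass $m := \mathcal{P}\wedge\mathcal{P}'$, of total weight $1-\beta \ge 1-\varepsilon$. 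The whole argument is then a bookkeeping of how the margin on the pair $(E_1,E_2)$ splits into a shared part carried by $\mu$ and a displaced part carried by $\nu,\nu'$.

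The key step is to isolate the shared-mass margin. Evaluating both decompositions on $E_1$ and $E_2$ gives
\begin{align*}
\mathcal{P}'(E_1) - \mathcal{P}'(E_2) &= (1-\beta)\bigl[\mu(E_1)-\mu(E_2)\bigr] + \beta\bigl[\nu'(E_1)-\nu'(E_2)\bigr], \\
\mathcal{P}(E_1) - \mathcal{P}(E_2) &= (1-\beta)\bigl[\mu(E_1)-\mu(E_2)\bigr] + \beta\bigl[\nu(E_1)-\nu(E_2)\bigr].
\end{align*}
Applying the assumption $\mathcal{P}'(E_1) \le \mathcal{P}'(E_2)$ to the first line yields $(1-\beta)\bigl[\mu(E_1)-\mu(E_2)\bigr] \le \beta\bigl[\nu'(E_2)-\nu'(E_1)\bigr] \le \beta$, and dividing by $1-\beta$ produces the shared-mass bound $\mu(E_1)-\mu(E_2) \le \frac{\beta}{1-\beta} \le \frac{\varepsilon}{1-\varepsilon}$, using that $x \mapsto \frac{x}{1-x}$ is increasing and $\beta \le \varepsilon$. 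This is exactly where the factor $\frac{\varepsilon}{1-\varepsilon}$ emerges: it is the displaced mass $\beta$ normalized by the shared weight $1-\beta$. The remaining task is to feed this back into the second line and control the displaced term $\beta\bigl[\nu(E_1)-\nu(E_2)\bigr]$.

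The main obstacle is precisely this last, extremal bookkeeping of the displaced mass, since the residual measures $\nu,\nu'$ are adversarial: $\nu$ may concentrate on $E_1$ and $\nu'$ on $E_2$ so as to keep $\mathcal{P}(E_1)-\mathcal{P}(E_2)$ as large as possible while still forcing $\mathcal{P}'(E_1)\le\mathcal{P}'(E_2)$. The crux is to keep all residual-mass inequalities strictly one-sided, arguing directly through the overlap measure $m=\mathcal{P}\wedge\mathcal{P}'$: since $m(E)\le\min\{\mathcal{P}(E),\mathcal{P}'(E)\}$ for every event $E$ and $m$ carries total mass $\ge 1-\varepsilon$, one bounds $\mathcal{P}(E_1)$ from above and $\mathcal{P}(E_2)$ from below through $m$ together with the residual budget, collapsing the computation onto the single normalization $\tfrac{1}{1-\varepsilon}$. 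I expect the delicate point to be ensuring that the worst admissible placement of the $\beta$-mass and of $\mu$ saturates the shared-mass bound without inflating the constant, so that the margin threshold remains $\frac{\varepsilon}{1-\varepsilon}$; in the application this is the statement that any classifier $\varepsilon$-suboptimal for $\hat{\mathcal{P}}$ must still rank $y^*$ first at a feature whenever $y^*$ leads by more than $\frac{\varepsilon}{1-\varepsilon}$ under $\hat{\mathcal{P}}$.
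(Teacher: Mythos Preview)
The paper does not prove this lemma; it is quoted from \citet{hardt2023collectiveaction} (their Lemma~B.1) and used as a black box, so there is no paper-side argument to compare against and your attempt has to stand on its own.

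Your first two paragraphs are sound: the maximal-coupling decomposition is correct, and you legitimately obtain $\mu(E_1)-\mu(E_2)\le\frac{\beta}{1-\beta}\le\frac{\varepsilon}{1-\varepsilon}$. The problem is the third paragraph, which is a description of what remains to be done rather than an argument. If you actually carry out that ``remaining task'' you get
\[
\mathcal{P}(E_1)-\mathcal{P}(E_2)=(1-\beta)\bigl[\mu(E_1)-\mu(E_2)\bigr]+\beta\bigl[\nu(E_1)-\nu(E_2)\bigr]\le \beta+\beta\cdot 1=2\beta\le 2\varepsilon,
\]
and no one-sided bookkeeping on the overlap measure $m=\mathcal{P}\wedge\mathcal{P}'$ will sharpen $2\varepsilon$ to $\frac{\varepsilon}{1-\varepsilon}$: the adversarial placement you yourself identify ($\nu$ on $E_1$, $\nu'$ on $E_2$) is realizable and saturates $2\beta$.

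The reason you cannot close the gap is that the lemma, with the paper's own definition $TV(\mathcal{P},\mathcal{P}')=\sup_E|\mathcal{P}(E)-\mathcal{P}'(E)|$, is false whenever $0<\varepsilon<\tfrac12$ (where $\frac{\varepsilon}{1-\varepsilon}<2\varepsilon$). A concrete counterexample on three atoms: $\mathcal{P}=(0.5,\,0.35,\,0.15)$, $\mathcal{P}'=(0.4,\,0.45,\,0.15)$, $E_1=\{1\}$, $E_2=\{2\}$, $\varepsilon=0.1$. Then $TV(\mathcal{P},\mathcal{P}')=0.1$ and $\mathcal{P}(E_1)-\mathcal{P}(E_2)=0.15>\tfrac{1}{9}=\frac{\varepsilon}{1-\varepsilon}$, yet $\mathcal{P}'(E_1)=0.4<0.45=\mathcal{P}'(E_2)$. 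The present paper is in fact aware of this: in the infinite-data-regime appendix it notes that ``there was an issue with their proof in the case where $\varepsilon>0$'' and restricts its comparison with \citet{hardt2023collectiveaction} to $\varepsilon=0$. So your difficulty in finishing the argument is not a missing trick; the stated threshold $\frac{\varepsilon}{1-\varepsilon}$ is simply unattainable, and the correct constant is $2\varepsilon$.
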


\label{app:lemmas}

\section{Proofs and additional remarks}
\label{app:theorems}

\subsection{Proof of Theorem \ref{thm:sp}}

\begin{proof}
    The proof relies on applying union bound to several concentration inequalities. Here, we will use Hoeffding's inequality (Lemma \ref{lma:hoeffding}).
    
    First, assume that $\varepsilon = 0$ for simplicity. Let $\tilde{x} \in \tilde{\mathcal{X}}$. By applying Hoeffding's inequality to the $n$ independent random variables $\mathbb{1}_{\left\{\tilde{x}_i'=\tilde{x}\right\}} \in [0,1]$ for $\tilde{x}_i' \in \tilde{D}^{(n)}$, we have that
        \begin{equation}
    \label{eq1}
 \underset{\tilde{x}' \sim \tilde{D}^{(n)}}{\hat{\mathbb{P}}}(\tilde{x}'=\tilde{x}) \le \underset{\tilde{x}' \sim \tilde{\mathcal{D}}}{\mathbb{P}}(\tilde{x}'=\tilde{x}) + R_{\tilde{\delta}}(n)
    \end{equation}
    with probability at least $1-\tilde{\delta}$ over the draw of $D^{(n)}$. Similarly,
    \begin{equation}
    \label{eq2}
         \underset{\tilde{x}' \sim \tilde{\mathcal{D}}}{\mathbb{P}}(\tilde{x}'=\tilde{x}) \le \underset{\tilde{x}' \sim \tilde{D}^{(n)}}{\hat{\mathbb{P}}}(\tilde{x}'=\tilde{x}) + R_{\tilde{\delta}}(n)
    \end{equation}
    holds with probability at least $1-\tilde{\delta}$ over the draw of $D^{(n)}$. For any fixed $\tilde{x} \in \tilde{\mathcal{X}}$ and $y' \in \mathcal{Y} \backslash \{y^* \}$, the following inequalities also hold individually with probability at least $1-\tilde{\delta}$ over the draw of consumers: 
    \begin{equation}
    \label{eq3}
     \underset{(x,y) \sim D^{(N-n)}}{\hat{\mathbb{P}}}(x=\tilde{x}, y=y') \le \underset{(x,y) \sim \mathcal{D}}{\mathbb{P}}(x=\tilde{x}, y=y') + R_{\tilde{\delta}}(N-n)
    \end{equation}
    \begin{equation}
    \label{eq4}
     \underset{(x,y) \sim \mathcal{D}}{\mathbb{P}}(x=\tilde{x}, y=y^*)  \le \underset{(x,y) \sim D^{(N-n)}}{\hat{\mathbb{P}}}(x=\tilde{x}, y=y^*) + R_{\tilde{\delta}}(N-n)
    \end{equation}
    Similarly:
        \begin{equation}
    \label{eq3p}
     \underset{(x,y) \sim D^{(n)}}{\hat{\mathbb{P}}}(x=\tilde{x}, y=y^*) \le \underset{(x,y) \sim \mathcal{D}}{\mathbb{P}}(x=\tilde{x}, y=y^*) + R_{\tilde{\delta}}(n)
    \end{equation}
    \begin{equation}
    \label{eq4p}
     \underset{(x,y) \sim \mathcal{D}}{\mathbb{P}}(x=\tilde{x}, y=y')  \le \underset{(x,y) \sim D^{(n)}}{\hat{\mathbb{P}}}(x=\tilde{x}, y=y') + R_{\tilde{\delta}}(n)
    \end{equation}
    We also have that
    \begin{equation}
    \label{eq5}
    \underset{x \sim D_{\rm test}}{\hat{\mathbb{P}}}(\hat{f}(g(x)) = y^*) \ge  \underset{x \sim \mathcal{D}}{\mathbb{P}}(\hat{f}(g(x)) = y^*) - R_{\tilde{\delta}}(N_{\rm test})
    \end{equation}
with probability at least $1-\tilde{\delta}$, where the probability is taken over $D_{\rm test}$ conditionally on $D^{(n)}$ and  $D^{(N-n)}$, so in particular it holds marginally over all consumers. Lastly, we have with probability at least $1-\tilde{\delta}$ over the draw of $D^{(n)}$:
    \begin{equation}
    \label{eq6}
    \begin{split}
 &\underset{\tilde{x} \sim \tilde{D}^{(n)}}{\hat{\mathbb{P}}} \left(  \frac{n}{N}\left(\underset{\tilde{x}' \sim \tilde{\mathcal{D}}}{\mathbb{P}}(\tilde{x}'=\tilde{x}) - R_{\tilde{\delta}}(n)\right) -\frac{N-n}{N} \left(\Delta_{\tilde{x}}^{(\mathcal{D})} + 2R_{\tilde{\delta}}(N-n)\right) > 0\right) \le\\
 &\underset{\tilde{x} \sim \tilde{\mathcal{D}}}{\mathbb{P}} \left(  \frac{n}{N}\left(\underset{\tilde{x}' \sim \tilde{\mathcal{D}}}{\mathbb{P}}(\tilde{x}'=\tilde{x}) - R_{\tilde{\delta}}(n)\right) -\frac{N-n}{N} \left(\Delta_{\tilde{x}}^{(\mathcal{D})} + 2R_{\tilde{\delta}}(N-n)\right) > 0\right) + R_{\tilde{\delta}}(n) 
    \end{split}
    \end{equation}
where $\Delta_{\tilde{x}}^{(\mathcal{D})} := \underset{y' \in \mathcal{Y}\backslash \{y^* \}}{\max } \left(\underset{(x,y) \sim \mathcal{D} }{\mathbb{P}}(x=\tilde{x}, y=y') - \underset{(x,y) \sim \mathcal{D}}{\mathbb{P}}(x=\tilde{x}, y=y^*)\right)$.\\

Now, we use union bound with these $2 + 2\# \tilde{\mathcal{X}} + 2\# \tilde{\mathcal{X}}\# \mathcal{Y}$ inequalities above, so that all the following calculations hold with probability at least $1-\delta$ over the draw of consumers. First, Inequality (\ref{eq5}) yields:
\begin{align*}
    \hat{S}(n) &\ge \underset{x \sim \mathcal{D}}{\mathbb{P}}(\hat{f}(g(x)) = y^*) - R_{\tilde{\delta}}(N_{\rm test})\\
    &= \underset{\tilde{x} \sim \tilde{\mathcal{D}}}{\mathbb{P}}(\hat{f}(\tilde{x}) = y^*) - R_{\tilde{\delta}}(N_{\rm test})
\end{align*}
by definition of $h$. Now see that for any $\tilde{x} \in \tilde{\mathcal{X}}$ we have that
    \begin{align*}
        \hat{f}(\tilde{x}) = y^* &\Longleftarrow \forall y' \neq y^*, \hat{\mathcal{P}}(\tilde{x},y^*) > \hat{\mathcal{P}}(\tilde{x},y')\\
        &\Longleftrightarrow \forall y' \neq y^*, \frac{n}{N}\underset{(\tilde{x}',\tilde{y}') \sim \tilde{D}^{(n)}}{\hat{\mathbb{P}}}(\tilde{x}'=\tilde{x}, \tilde{y}'=y^*) + \frac{N-n}{N}\underset{(x,y) \sim D^{(N-n)}}{\hat{\mathbb{P}}}(x=\tilde{x}, y=y^*)\\
        & >  \frac{n}{N}\underset{(\tilde{x}',\tilde{y}') \sim \tilde{D}^{(n)}}{\hat{\mathbb{P}}}(\tilde{x}'=\tilde{x}, \tilde{y}'=y') + \frac{N-n}{N}\underset{(x,y) \sim D^{(N-n)}}{\hat{\mathbb{P}}}(x=\tilde{x}, y=y') \\
        &\Longleftrightarrow \forall y' \neq y^*, \frac{n}{N}\underset{\tilde{x}' \sim \tilde{D}^{(n)}}{\hat{\mathbb{P}}}(\tilde{x}'=\tilde{x}) + \frac{N-n}{N}\underset{(x,y) \sim D^{(N-n)}}{\hat{\mathbb{P}}}(x=\tilde{x}, y=y^*)\\
        & >  \frac{N-n}{N}\underset{(x,y) \sim D^{(N-n)}}{\hat{\mathbb{P}}}(x=\tilde{x}, y=y') \\
        &\Longleftrightarrow \frac{n}{N}\underset{\tilde{x}' \sim \tilde{D}^{(n)}}{\hat{\mathbb{P}}}(\tilde{x}'=\tilde{x}) \\
        &- \frac{N-n}{N} \underset{y' \in \mathcal{Y}\backslash \{y^* \}}{\text{max }} \left( \underset{(x,y) \sim D^{(N-n)}}{\hat{\mathbb{P}}}(x=\tilde{x}, y=y') - \underset{(x,y) \sim D^{(N-n)}}{\hat{\mathbb{P}}}(x=\tilde{x}, y=y^*)\right) > 0\\
        &\Longleftarrow \frac{n}{N}\left(\underset{\tilde{x}' \sim \tilde{\mathcal{D}}}{\mathbb{P}}(\tilde{x}'=\tilde{x}) - R_{\tilde{\delta}}(n) \right) - \frac{N-n}{N} \left(\Delta_{\tilde{x}}^{(\mathcal{D})} +2R_{\tilde{\delta}}(N-n) \right) > 0
    \end{align*}
    
    where the last implication comes from Inequality (\ref{eq2}) and because
    \begin{align*}
        & \underset{y' \in \mathcal{Y}\backslash \{y^* \}}{\text{max }} \left(\underset{(x,y) \sim D^{(N-n)}}{\hat{\mathbb{P}}}(x=\tilde{x}, y=y') - \underset{(x,y) \sim D^{(N-n)}}{\hat{\mathbb{P}}}(x=\tilde{x}, y=y^*)\right)\\
        &\le \underset{y' \in \mathcal{Y}\backslash \{y^* \}}{\text{max }} \left(\underset{(x,y) \sim \mathcal{D}}{\mathbb{P}}(x=\tilde{x}, y=y') - \underset{(x,y) \sim \mathcal{D}}{\mathbb{P}}(x=\tilde{x}, y=y^*)\right) + 2R_{\tilde{\delta}}(N-n)\\
        &= \Delta_{\tilde{x}}^{(\mathcal{D})} +2R_{\tilde{\delta}}(N-n)\\
    \end{align*}
    by Inequality (\ref{eq3}) on $\tilde{\mathcal{X}} \times \mathcal{Y} \backslash \{y^* \}$ and Inequality (\ref{eq4}) on $\tilde{\mathcal{X}} \times \{y^* \}$. Therefore, we have that:
    \begin{align*}
    \begin{split}
        &\hat{S}(n) \ge \underset{\tilde{x} \sim \tilde{\mathcal{D}}}{\mathbb{P}} \left(  \frac{n}{N}\left(\underset{\tilde{x}' \sim \tilde{\mathcal{D}}}{\mathbb{P}}(\tilde{x}'=\tilde{x}) - R_{\tilde{\delta}}(n)\right) -\frac{N-n}{N} \left(\Delta_{\tilde{x}}^{(\mathcal{D})} + 2R_{\tilde{\delta}}(N-n)\right) > 0\right)\\
        &- R_{\tilde{\delta}}(N_{\rm test})
    \end{split}
    \end{align*}
    and so by applying Inequality (\ref{eq6}) we deduce that:
        \begin{align*}
    \begin{split}
        &\hat{S}(n) \ge \underset{\tilde{x} \sim \tilde{D}^{(n)}}{\hat{\mathbb{P}}} \left(  \frac{n}{N}\left(\underset{\tilde{x}' \sim \tilde{\mathcal{D}}}{\mathbb{P}}(\tilde{x}'=\tilde{x}) - R_{\tilde{\delta}}(n)\right) -\frac{N-n}{N} \left(\Delta_{\tilde{x}}^{(\mathcal{D})} + 2R_{\tilde{\delta}}(N-n)\right) > 0\right)\\
        &- R_{\tilde{\delta}}(n) - R_{\tilde{\delta}}(N_{\rm test}).
    \end{split}
    \end{align*}
    Finally, note that:
       \begin{align*}
         \Delta_{\tilde{x}}^{(\mathcal{D})} &= \underset{y' \in \mathcal{Y}\backslash \{y^* \}}{\max } \left(\underset{(x,y) \sim \mathcal{D} }{\mathbb{P}}(x=\tilde{x}, y=y') - \underset{(x,y) \sim \mathcal{D}}{\mathbb{P}}(x=\tilde{x}, y=y^*)\right) \\
        &\le \underset{y' \in \mathcal{Y}\backslash \{y^* \}}{\text{max }} \left(\underset{(x,y) \sim D^{(n)}}{\hat{\mathbb{P}}}(x=\tilde{x}, y=y') - \underset{(x,y) \sim D^{(n)}}{\hat{\mathbb{P}}}(x=\tilde{x}, y=y^*)\right) + 2R_{\tilde{\delta}}(n)\\
        &= \Delta_{\tilde{x}}^{(n)} +2R_{\tilde{\delta}}(n)\\
    \end{align*}
    by Inequality (\ref{eq3p}) on $\tilde{\mathcal{X}} \times  \{y^* \}$ and Inequality (\ref{eq4p}) on $\tilde{\mathcal{X}} \times \mathcal{Y} \backslash \{y^* \}$. So together with Inequality (\ref{eq1}), we conclude that:
    \begin{align*}
    \begin{split}
        \hat{S}(n) &\ge \underset{\tilde{x} \sim \tilde{D}^{(n)}}{\hat{\mathbb{P}}} \left(  \frac{n}{N}\left(\underset{\tilde{x}' \sim \tilde{D}^{(n)}}{\hat{\mathbb{P}}}(\tilde{x}'=\tilde{x}) - 2R_{\tilde{\delta}}(n)\right) -\frac{N-n}{N} \left(\Delta_{\tilde{x}}^{(n)} + 2R_{\tilde{\delta}}(n) + 2R_{\tilde{\delta}}(N-n)\right) > 0 \right)\\
        & - R_{\tilde{\delta}}(n) - R_{\tilde{\delta}}(N_{\rm test}).
    \end{split}
    \end{align*}

    The case $\varepsilon > 0$ is similar, noting that for any $\tilde{x} \in \tilde{\mathcal{X}}$:
    \begin{align*}
        \hat{f}(\tilde{x}) = y^* &\Longleftarrow \forall y' \neq y^*, \tilde{\mathcal{P}}(\tilde{x},y^*) > \tilde{\mathcal{P}}(\tilde{x},y')\\
        &\Longleftarrow \forall y' \neq y^*, \hat{\mathcal{P}}(\tilde{x},y^*) > \hat{\mathcal{P}}(\tilde{x},y') + \frac{\varepsilon}{1-\varepsilon} \text{\quad by Lemma \ref{lma:tv_epsilon}}
    \end{align*} 
    and with slight modifications to Inequality (\ref{eq6}) to incorporate the term $\frac{\varepsilon}{1-\varepsilon}$.
\end{proof}

If there exists a subset $\mathcal{X}_0 \subseteq \tilde{\mathcal{X}}$ such that $\# \mathcal{X}_0 < \# \tilde{\mathcal{X}} $ and $\underset{\tilde{x} \sim \tilde{\mathcal{D}}}{\mathbb{P}}(\tilde{x} \in \mathcal{X}_0)$ is significant, it may be preferable to condition the probability based on whether or not the feature belongs to $\mathcal{X}_0$, and to disregard the term conditioned on not being in $\mathcal{X}_0$. In this case, the union bound should be applied only to $\tilde{x} \in \mathcal{X}_0$.

The proof of Theorem \ref{thm:sp_fo} is essentially the same as the proof of Theorem \ref{thm:sp}. One difference is that we can not simplify $\underset{x \sim \mathcal{D}}{\mathbb{P}}(\hat{f}(g(x)) = y^*) = \underset{\tilde{x} \sim \tilde{\mathcal{D}}}{\mathbb{P}}(\hat{f}(\tilde{x}) = y^*)$ as we did in the proof of Theorem \ref{thm:sp}. Therefore, the calculations must be carried out directly using the term $g(x)$.

\subsection{Proof of Theorem \ref{thm:su}}

\begin{proof}
    The proof closely follows that of Theorem \ref{thm:sp}, so we will only highlight the main differences. We focus on the case where $\varepsilon=0$; the case where $\varepsilon>0$ can be handled similarly to the proof of Theorem \ref{thm:sp}.

    Let $\tilde{x} \in \tilde{\mathcal{X}}$. We have:
    \begin{align*}
        \hat{f}(\tilde{x}) \neq y^* &\Longleftarrow \exists y' \neq y^* \colon \hat{\mathcal{P}}(\tilde{x},y^*) < \hat{\mathcal{P}}(\tilde{x},y') \\
        &\Longleftarrow \hat{\mathcal{P}}(\tilde{x},y^*) < \hat{\mathcal{P}}(\tilde{x},\hat{y}_{\tilde{x}}) \\
        &\Longleftrightarrow \frac{n}{N}\underbrace{\underset{(\tilde{x}',\tilde{y}') \sim \tilde{D}^{(n)}}{\hat{\mathbb{P}}}(\tilde{x}'=\tilde{x}, \tilde{y}'=y^*)}_{=0} + \frac{N-n}{N}\underset{(x,y) \sim D^{(N-n)}}{\hat{\mathbb{P}}}(x=\tilde{x}, y=y^*)\\
        & <  \frac{n}{N}\underbrace{\underset{(\tilde{x}',\tilde{y}') \sim \tilde{D}^{(n)}}{\hat{\mathbb{P}}}(\tilde{x}'=\tilde{x}, \tilde{y}'=\hat{y}_{\tilde{x}})}_{= \underset{\tilde{x}'\sim \tilde{D}^{(n)}}{\hat{\mathbb{P}}}(\tilde{x}'=\tilde{x})} + \frac{N-n}{N}\underset{(x,y) \sim D^{(N-n)}}{\hat{\mathbb{P}}}(x=\tilde{x}, y=\hat{y}_{\tilde{x}})\\
        &\Longleftrightarrow \frac{n}{N}\underset{\tilde{x}'\sim \tilde{D}^{(n)}}{\hat{\mathbb{P}}}(\tilde{x}'=\tilde{x}) \\
        &-\frac{N-n}{N} \underbrace{\left(\underset{(x,y) \sim D^{(N-n)}}{\hat{\mathbb{P}}}(x=\tilde{x}, y=y^*) - \underset{(x,y) \sim D^{(N-n)}}{\hat{\mathbb{P}}}(x=\tilde{x}, y=\hat{y}_{\tilde{x}}) \right)}_{=: \Delta_{\tilde{x}}^{(N-n)}} > 0\\
    \end{align*}
    and see that
    \begin{align*}
        \Delta_{\tilde{x}}^{(N-n)} &\le \underbrace{\underset{(x,y) \sim \mathcal{D}}{\mathbb{P}}(x=\tilde{x}, y=y^*) - \underset{(x,y) \sim \mathcal{D}}{\mathbb{P}}(x=\tilde{x}, y=\hat{y}_{\tilde{x}})}_{=: \Delta_{\tilde{x}}^{(\mathcal{D})}} + 2 R_{\tilde{\delta}}(N-n)
    \end{align*}
    by applying Hoeffding's inequality to both empirical probabilities. Note that Hoeffding's inequality for the term $\underset{(x,y) \sim \mathcal{D}}{\mathbb{P}}(x=\tilde{x}, y=\hat{y}_{\tilde{x}})$ holds in probability over $D^{(N-n)}$ conditionally on $D^{(n_{\rm e})}$. It should only be applied $\#  \tilde{\mathcal{X}}$ times here contrary to $\#  \tilde{\mathcal{X}} (\# \mathcal{Y} -1)$ times in signal planting because $\hat{y}_{\tilde{x}}$ is fixed conditionally on $D^{(n_{\rm e})}$. Then, since the inequality holds in probability over $D^{(N-n)}$ conditionally on $D^{(n_{\rm e})}$, in particular it holds marginally over all consumers.
    
    The conclusion is similar to that of Theorem \ref{thm:sp}. We just need to apply the same argument as above to use Hoeffding's inequality: $\underset{(x,y) \sim \mathcal{D}}{\mathbb{P}}(x=\tilde{x}, y=\hat{y}_{\tilde{x}}) \ge \underset{(x,y) \sim D^{(n-n_{\rm e})}}{\hat{\mathbb{P}}}(x=\tilde{x}, y=\hat{y}_{\tilde{x}}) - R_{\tilde{\delta}}(n-n_{\rm e})$.
\end{proof}

In fact, it is possible to obtain a slightly better bound than (\ref{ineq:su}). Indeed, to bound $\Delta_{\tilde{x}}^{(\mathcal{D})}$, one can use Hoeffding's inequality on $D^{(n)}$ instead of $D^{(n-n_{\rm e})}$ for the term $\underset{(x,y) \sim \mathcal{D}}{\mathbb{P}}(x=\tilde{x}, y=y^*)$. We kept the inequality (\ref{ineq:su}) as it is to simplify the expression of the bound and make the interpretation of the different terms easier.

\subsection{Interpretation of the bound from Theorem \ref{thm:su}}

As the signal planting lower bound (\ref{ineq:sp}), the signal unplanting lower bound (\ref{ineq:su}) also depends on three distinct terms:
\begin{itemize}
    \item   $\frac{n}{N}\left(\underset{\tilde{x}' \sim \tilde{D}^{(n)}}{\hat{\mathbb{P}}}(\tilde{x}'=\tilde{x}) - 2R_{\tilde{\delta}}(n)\right)$
    \item $-\frac{N-n}{N} \left(\Delta_{\tilde{x}}^{(n-n_{\rm e})} + 2R_{\tilde{\delta}}(n-n_{\rm e}) + 2R_{\tilde{\delta}}(N-n)\right)$
    \item $-\frac{\varepsilon}{1-\varepsilon}$
\end{itemize}

The first and third terms play the same role as in signal planting. The main difference here is the second term $-\frac{N-n}{N} \left(\Delta_{\tilde{x}}^{(n-n_{\rm e})} + 2R_{\tilde{\delta}}(n-n_{\rm e}) + 2R_{\tilde{\delta}}(N-n)\right)$. The term $\Delta_{\tilde{x}}^{(n-n_{\rm e})}$ represents how much more probable the target label $y^*$ is compared to other labels given a feature belonging to $\tilde{\mathcal{X}}$. If $y^*$ is much more probable than the other labels, it will be difficult to unplant the signal. Conversely, if $y^*$ is not the most probable, it will be easier to unplant the signal. The collective's plan relies on the choice of $n_{\rm e}<n$. If $n_{\rm e}$ is large, it will be more likely for the collective to select the best possible label given a modified feature $\tilde{x}$ to maximize the impact of the strategy $h$. However, if $n_{\rm e}$ is too large compared to $n$, the estimation error $2R_{\tilde{\delta}}(n-n_{\rm e})$ may become too significant. Therefore, there is a trade-off for the optimal choice of the value $n_{\rm e}$. This trade-off depends on the specific dataset being considered.

\subsection{Proof of Theorem \ref{thm:se}}

\begin{proof}   
    Assume that $\varepsilon = 0$. The case of $\varepsilon > 0$ is handled in the same way as in signal planting and signal unplanting. By Hoeffding's inequality, we know that the following inequalities each hold with probability at least $1-\tilde{\delta}$ (over the draw of consumers), for some fixed $\tilde{x} \in \tilde{\mathcal{X}}$ and $y' \in \mathcal{Y}\backslash \{y^*_{\tilde{x}} \}$:
        \begin{equation}
    \label{eq7}
 \underset{(x,y) \sim \tilde{\mathcal{D}}}{\mathbb{P}}(x=\tilde{x},y=y^*_{\tilde{x}}) \le  \underset{(x,y) \sim \tilde{D}^{(n)}}{\hat{\mathbb{P}}}(x=\tilde{x},y=y^*_{\tilde{x}})  + R_{\tilde{\delta}}(n)
    \end{equation}
    \begin{equation}
    \label{eq8}
         \underset{(x,y) \sim \tilde{D}^{(n)}}{\hat{\mathbb{P}}}(x=\tilde{x},y=y')   \le \underset{(x,y) \sim \tilde{\mathcal{D}}}{\mathbb{P}}(x=\tilde{x},y=y')  + R_{\tilde{\delta}}(n)
    \end{equation}
   Similarly, for some fixed $x' \in \mathcal{X}$ and $y' \in \mathcal{Y}\backslash \{y^*_{g(x')} \}$:
        \begin{equation}
    \label{eq9}
         \underset{x \sim D^{(n)}}{\hat{\mathbb{P}}}(x=x')   \le \underset{x \sim \mathcal{D}}{\mathbb{P}}(x=x')  + R_{\tilde{\delta}}(n)
    \end{equation}
            \begin{equation}
    \label{eq10}
            \underset{x \sim \mathcal{D}}{\mathbb{P}}(x=x') \le \underset{x \sim D^{(n)}}{\hat{\mathbb{P}}}(x=x') + R_{\tilde{\delta}}(n)
    \end{equation}
    \begin{equation}
    \label{eq11}
     \underset{(x,y) \sim D^{(N-n)}}{\hat{\mathbb{P}}}(x=x', y=y') \le \underset{(x,y) \sim \mathcal{D}}{\mathbb{P}}(x=x', y=y') + R_{\tilde{\delta}}(N-n)
    \end{equation}
    \begin{equation}
    \label{eq12}
     \underset{(x,y) \sim \mathcal{D}}{\mathbb{P}}(x=x', y=y^*_{g(x')})  \le \underset{(x,y) \sim D^{(N-n)}}{\hat{\mathbb{P}}}(x=x', y=y^*_{g(x')}) + R_{\tilde{\delta}}(N-n)
    \end{equation}
        \begin{equation}
    \label{eq13}
     \underset{(x,y) \sim D^{(n)}}{\hat{\mathbb{P}}}(x=x', y=y^*_{g(x')}) \le \underset{(x,y) \sim \mathcal{D}}{\mathbb{P}}(x=x', y=y^*_{g(x')}) + R_{\tilde{\delta}}(n)
    \end{equation}
    \begin{equation}
    \label{eq14}
     \underset{(x,y) \sim \mathcal{D}}{\mathbb{P}}(x=x', y=y')  \le \underset{(x,y) \sim D^{(n)}}{\hat{\mathbb{P}}}(x=x', y=y') + R_{\tilde{\delta}}(n)
    \end{equation}
    We also have that
    \begin{equation}
    \label{eq15}
    \underset{x' \sim D_{\rm test}}{\hat{\mathbb{P}}}(\hat{f}(g(x')) = \hat{f}(x')) \ge  \underset{x' \sim \mathcal{D}}{\mathbb{P}}(\hat{f}(g(x')) = \hat{f}(x')) - R_{\tilde{\delta}}(N_{\rm test})
    \end{equation}
and
    \begin{equation}
    \label{eq16}
    \begin{split}
 &\underset{x' \sim D^{(n)}}{\hat{\mathbb{P}}} \left(  \frac{n}{N}\left(\underset{x \sim \mathcal{D}}{\mathbb{P}}(x=x') - R_{\tilde{\delta}}(n)\right) -\frac{N-n}{N} \left(\Delta_{x'}^{(\mathcal{D})} + 2R_{\tilde{\delta}}(N-n)\right) > 0\right) \le\\
 &\underset{x' \sim \mathcal{D}}{\mathbb{P}} \left(  \frac{n}{N}\left(\underset{x \sim \mathcal{D}}{\mathbb{P}}(x=x') - R_{\tilde{\delta}}(n)\right) -\frac{N-n}{N} \left(\Delta_{x'}^{(\mathcal{D})} + 2R_{\tilde{\delta}}(N-n)\right) > 0\right) + R_{\tilde{\delta}}(n) 
    \end{split}
    \end{equation}
where $\Delta_{x'}^{(\mathcal{D})} := \underset{y' \in \mathcal{Y}\backslash \{y^*_{g(x')} \}}{\max } \left(\underset{(x,y) \sim \mathcal{D} }{\mathbb{P}}(x=x', y=y') - \underset{(x,y) \sim \mathcal{D} }{\mathbb{P}}(x=x', y=y^*_{g(x')} )\right)$.

We apply union bound to the $2 + \# \tilde{\mathcal{X}}\# \mathcal{Y} + 2\# \mathcal{X} + 2\# \mathcal{X}\# \mathcal{Y} $ inequalities above, so that the calculations below hold with probability at least $1-\delta$.

Let $\tilde{x} \in \tilde{\mathcal{X}}$. By \Cref{hyp:A1} we know that:
\begin{align}
\underset{(x,y) \sim \mathcal{D}}{\mathbb{P}}(x=\tilde{x}, y=y^*_{\tilde{x}}) > \underset{(x,y) \sim \mathcal{D}}{\mathbb{P}}(x=\tilde{x}, y=y') + \eta
\label{eq:intermediate_signal_erasing}
\end{align}

for all $y' \neq y^*_{\tilde{x}}$. So by using Inequality (\ref{eq7}) and Inequality (\ref{eq8}):
\[
\underset{(x,y) \sim D^{(n)}}{\hat{\mathbb{P}}}(x=\tilde{x}, y=y^*_{\tilde{x}}) + R_{\tilde{\delta}}(n) > \underset{(x,y) \sim D^{(n)}}{\hat{\mathbb{P}}}(x=\tilde{x}, y=y') - R_{\tilde{\delta}}(n) + \eta
\]
for all $\forall y' \neq y^*_{\tilde{x}}$. Provided that $n \ge \frac{2 \log(1/\tilde{\delta})}{\eta^2}$, we have that $\eta - 2R_{\tilde{\delta}}(n) > 0$ and so we deduce that
\[
\underset{(x,y) \sim D^{(n)}}{\hat{\mathbb{P}}}(x=\tilde{x}, y=y^*_{\tilde{x}}) > \underset{(x,y) \sim D^{(n)}}{\hat{\mathbb{P}}}(x=\tilde{x}, y=y')
\]
for all $\forall y' \neq y^*_{\tilde{x}}$.\\

Therefore, for each $\tilde{x} \in \tilde{\mathcal{X}}$, the collective can compute $y^*_{\tilde{x}}$ by using $D^{(n)}$ as follows:
\begin{align}
y^*_{\tilde{x}} = \underset{y' \in \mathcal{Y}}{\text{argmax}} \underset{(x,y) \sim D^{(n)}}{\hat{\mathbb{P}}}(x=\tilde{x}, y=y'). 
\label{eq:signal_erasing_y^*}
\end{align}

Then, the collective plays the erasure stategy $h(x,y) = (x, y^*_{g(x)}).$

Now see that using Inequality (\ref{eq11}) and Inequality (\ref{eq12}) together with Inequality (\ref{eq:intermediate_signal_erasing}) yields that
\[
\underset{(x,y) \sim D^{(N-n)}}{\hat{\mathbb{P}}}(x=\tilde{x}, y=y^*_{\tilde{x}}) + R_{\tilde{\delta}}(N-n) > \underset{(x,y) \sim D^{(N-n)}}{\hat{\mathbb{P}}}(x=\tilde{x}, y=y') - R_{\tilde{\delta}}(N-n) + \eta
\]
for all $ y' \neq y^*_{\tilde{x}}$. Provided that $n \le N - \frac{2 \log(1/\tilde{\delta})}{\eta^2}$, we have that $\eta - 2R_{\tilde{\delta}}(N-n) > 0$ and so we deduce that
\[
\underset{(x,y) \sim D^{(N-n)}}{\hat{\mathbb{P}}}(x=\tilde{x}, y=y^*_{\tilde{x}}) > \underset{(x,y) \sim D^{(N-n)}}{\hat{\mathbb{P}}}(x=\tilde{x}, y=y')
\]
for all $\forall y' \neq y^*_{\tilde{x}}$. This implies that
\begin{align}
\underset{y' \in \mathcal{Y}}{\text{argmax}} \underset{(x,y) \sim D^{(N-n)}}{\hat{\mathbb{P}}}(x=\tilde{x}, y=y') = y^*_{\tilde{x}}. 
\label{eq:argmaxNn}
\end{align}
So for all $x' \in \mathcal{X}$ we have:
\begin{align*}
    y^*_{g(x')} &= \underset{y' \in \mathcal{Y}}{\text{argmax}} \left\{ \frac{n}{N} \underbrace{\underset{(\tilde{x}, \tilde{y}) \sim \tilde{D}^{(n)}}{\hat{\mathbb{P}}}(\tilde{x}=g(x'), \tilde{y}=y')}_{=0 \text{ if } y' \neq y^*_{g(x')} \text{ since } g(g(x'))=g(x') } + \frac{N-n}{N} \underbrace{ \underset{(x, y) \sim D^{(N-n)}}{\hat{\mathbb{P}}}(x=g(x'), y=y')}_{\text{maximized in } y^*_{g(x')} \text{ by Equation } (\ref{eq:argmaxNn})} \right\}\\
    &= \underset{y' \in \mathcal{Y}}{\text{argmax}} \ \hat{\mathcal{P}}(g(x'), y')\\
    &= \hat{f}(g(x')).
\end{align*}

Therefore
\begin{align*}
    \hat{S}(n) &= \underset{x' \sim D_{\rm test}}{\hat{\mathbb{P}}}\left(\hat{f}(x')=\hat{f}(g(x'))\right) \\
    &\ge \underset{x' \sim \mathcal{D}}{\mathbb{P}}\left(\hat{f}(x')=\hat{f}(g(x'))\right) - R_{\tilde{\delta}}(N_{\rm test}) \text{\quad by Inequality (\ref{eq15})}\\
    &= \underset{x' \sim \mathcal{D}}{\mathbb{P}}\left(\hat{f}(x')=y^*_{g(x')} \right) - R_{\tilde{\delta}}(N_{\rm test}).
\end{align*}
Now let $x' \in \mathcal{X}$. A calculation similar to the one in the proof of Theorem \ref{thm:sp} shows that
\[
        \hat{f}(x') = y^*_{g(x')} \Longleftarrow \frac{n}{N}\left(\underset{x \sim \mathcal{D}}{\mathbb{P}}(x=x') - R_{\tilde{\delta}}(n) \right) - \frac{N-n}{N} \left(\Delta_{x'}^{(\mathcal{D})} +2R_{\tilde{\delta}}(N-n) \right) > 0.
\]
Here we reused Inequality (\ref{eq11}) and Inequality (\ref{eq12}), together with Inequality (\ref{eq10}) and the fact that $\tilde{\mathcal{D}}_{\mathcal{X}} = \mathcal{D}_{\mathcal{X}}$ by definition of $h$.

Therefore
    \begin{align*}
    \begin{split}
        &\hat{S}(n) \ge \underset{x' \sim \mathcal{D}}{\mathbb{P}} \left(\frac{n}{N}\left(\underset{x \sim \mathcal{D}}{\mathbb{P}}(x=x') - R_{\tilde{\delta}}(n) \right) - \frac{N-n}{N} \left(\Delta_{x'}^{(\mathcal{D})} +2R_{\tilde{\delta}}(N-n) \right) > 0\right)\\
        &- R_{\tilde{\delta}}(N_{\rm test})
    \end{split}
    \end{align*}
    and so by applying Inequality (\ref{eq16}) we deduce that
        \begin{align*}
    \begin{split}
        &\hat{S}(n) \ge \underset{x' \sim D^{(n)}}{\mathbb{P}} \left(\frac{n}{N}\left(\underset{x \sim \mathcal{D}}{\mathbb{P}}(x=x') - R_{\tilde{\delta}}(n) \right) - \frac{N-n}{N} \left(\Delta_{x'}^{(\mathcal{D})} +2R_{\tilde{\delta}}(N-n) \right) > 0\right)\\
        &- R_{\tilde{\delta}}(n)- R_{\tilde{\delta}}(N_{\rm test}).
    \end{split}
    \end{align*}

We conclude with Inequality (\ref{eq9}), Inequality (\ref{eq13}), and Inequality (\ref{eq14}).
\end{proof}

The result of Theorem \ref{thm:se} depends on whether the collective is large enough to compute the optimal label $y^*_{\tilde{x}}$ for each $\tilde{x} \in \tilde{\mathcal{X}}$. Specifically, $n$ must exceed $\frac{2 \log(1/\tilde{\delta})}{\eta^2}$. In Figure \ref{fig:eta}, we plot $\frac{2 \log(1/\tilde{\delta})}{\eta^2}$ as a function of $\eta$ with the parameters used in our experiment in Section \ref{sec:experimental-evaluation} and detailed in Appendix \ref{app:addition-details-experiment}: $\# \mathcal{X}  = 2,388,787,200$; $\# \tilde{\mathcal{X}}= 5$; and $\# \mathcal{Y} = 4$.

\begin{figure}[H]
    \centering
    \includegraphics[width=.6\textwidth]{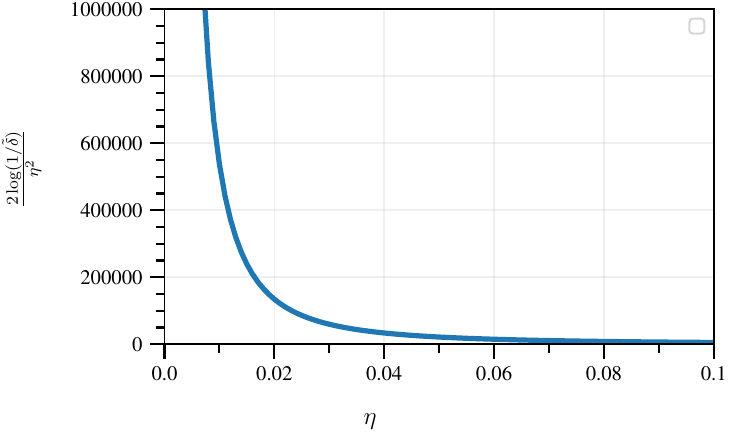}
    \caption{Plot of the minimum value of $n$ as a function of $\eta$ for computing the erasure strategy.}
    \label{fig:eta}
\end{figure}

For example, suppose the collective consists of $n = 100,000$ members. Then, if $\eta$ exceeds approximately 0.03, the signal-erasing bound \ref{ineq:se} from Theorem \ref{thm:se} holds.

\subsection{Interpretation of the bound from Theorem \ref{thm:se}}

The signal erasing lower bound \ref{ineq:se} depends on three distinct terms:
\begin{itemize}
    \item   $\frac{n}{N}\left(\underset{x \sim D^{(n)}}{\hat{\mathbb{P}}}(x=x') - 2R_{\tilde{\delta}}(n)\right)$
    \item $ -\frac{N-n}{N} \left(\Delta_{x'}^{(n)} + 2R_{\tilde{\delta}}(n) + 2R_{\tilde{\delta}}(N-n)\right)$
    \item $-\frac{\varepsilon}{1-\varepsilon}$
\end{itemize}

The second term captures how optimal the label for the feature $g(x)$ is for the feature $x$ as well, thus reflecting the initial sensitivity of the signal, which aligns with the intuition discussed by \citet{hardt2023collectiveaction}.

There are two major differences between signal erasing and signal planting or unplanting. First, the term $\frac{n}{N}\left(\underset{x \sim D^{(n)}}{\hat{\mathbb{P}}}(x=x') - 2R_{\tilde{\delta}}(n)\right)$ is significantly weaker in the case of signal erasing because it depends on $\mathcal{X}$ rather than $\tilde{\mathcal{X}}$, making the probability much lower. Second, the term $\tilde{\delta}$ is much smaller, making the estimation terms much more significant. In most cases, we can expect the obtained bound in Theorem \ref{thm:se} to be impractical because the estimation terms will be too large. The bound in Theorem \ref{thm:se} becomes useful only when we have a large number of data points relative to the size of the universe $\mathcal{X} \times \mathcal{Y}$.  

We provide an analysis of the estimation terms to better understand their influence not only in signal erasing but also in signal planting and signal unplanting. We illustrate it with a basic example in Figure \ref{fig:comparison_R}. We fix $\delta = 0.05$. We consider a binary classification problem: $\# \mathcal{Y} = 2$. In this case, the values of $\tilde{\delta}$ are as follows: $\delta/(2+6\#\tilde{\mathcal{X}})$ in signal planting and signal unplanting, and $\delta/(2+4\#\tilde{\mathcal{X}}+6\#\mathcal{X}) \approx \delta/(2+6\#\mathcal{X})$ in signal erasing, assuming the reasonable hypothesis that $\#\tilde{\mathcal{X}} \ll \#\mathcal{X}$. Therefore, we focus on the estimation terms $R_{\tilde{\delta}}(n)$ with $\tilde{\delta}$ of the form $\delta/(2+6\gamma)$ where $\gamma$ varies. For simplicity, we assume that $\#\tilde{\mathcal{X}}$ and $\#\mathcal{X}$ are powers of two: $\gamma = 2^m$ for some $m \ge 1$. 

\begin{figure}[h]
    \centering
    \includegraphics[width=.6\textwidth]{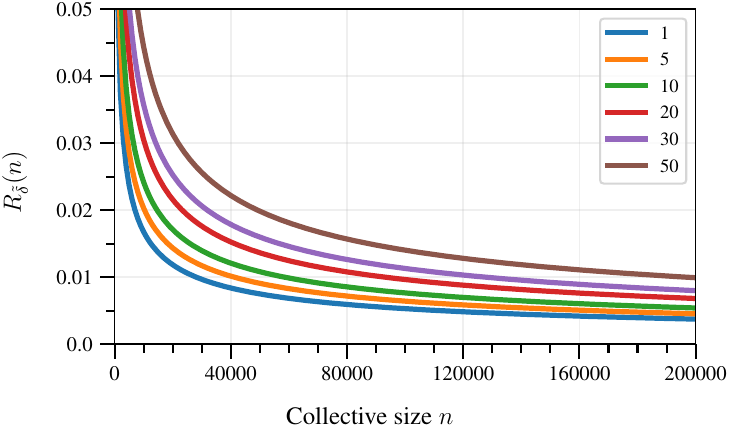}
    \caption{Plot of estimation terms $R_{\tilde{\delta}}(n)$ as a function of $n$ for various values of $m$, where $\tilde{\delta} = \delta/(2+6\times2^m)$.}
    \label{fig:comparison_R}
\end{figure}

Given a target error estimation $R_*$, the value $n$ at which $R_{\tilde{\delta}}(n) \le R_*$ is approximately equal to $n \approx \frac{\log(2)}{2R_*^2} m + \frac{\log(6)+\log(1/\delta)}{2R_*^2}$. This value of $n$ scales linearly with $m$.

This simple analysis highlights an interesting phenomenon: the lower bounds obtained are influenced not only by the ratio $n/N$ (i.e., how large $n$ is relative to $N$) but also by the absolute size of $n$ in relation to the universe $\mathcal{X} \times \mathcal{Y}$. This is crucial for obtaining estimation terms $R_{\tilde{\delta}}(n)$ that are as small as possible. The intuition that $n$ needs to be sufficiently large relative to $N$ is already present in the work by \citet{hardt2023collectiveaction}. Our research sheds light on the additional requirement that $n$ should also be large in comparison to the size of the considered universe $\mathcal{X} \times \mathcal{Y}$.

\section{Bounds in the Infinite Data Regime}
\label{app:infinite-data-regime}

As discussed in Subsection \ref{ss:comparative_analysis}, we can compare our results with those presented by \citet{hardt2023collectiveaction} in the infinite data regime when $n, N, N_{\rm test} \rightarrow \infty$ and $n/N \rightarrow \alpha$ for some $\alpha \in (0,1)$. We provide further details on this comparison here.

In the infinite data regime, the framework is the same as in algorithmic collective action: the platform observes the distribution
\[
\mathcal{P}(x_0, y_0) := \alpha \underset{\tilde{\mathcal{D}}}{\mathbb{P}}(x_0, y_0) + (1-\alpha) \underset{\mathcal{D}}{\mathbb{P}}(x_0, y_0)
\]
and selects a classfier $f$ based on the distribution $\mathcal{P}$:
\begin{definition}
\label{def:suboptimal}
Let  $\varepsilon> 0$. A classifier $f: \mathcal{X} \rightarrow \mathcal{Y}$ is $\varepsilon$-suboptimal on a set $\mathcal{X}' \subseteq \mathcal{X}$ under the distribution $\mathcal{P}$ if there exists a distribution $\tilde{\mathcal{P}}$ with $TV(\mathcal{P}, \tilde{\mathcal{P}}) \le \varepsilon$ such that
\[
f(x) \in \underset{y \in \mathcal{Y}}{\text{argmax }} \tilde{\mathcal{P}}(x,y)
\]
for all $x \in \mathcal{X}'$.
\end{definition}

With these notations, we can redefine the successes for the various objectives of the collective and derive lower bounds on these successes. Note that to obtain the bounds below, it is no longer necessary to assume that $\mathcal{X}$ is finite, since this assumption is only used to apply the union bound a finite number of times when the collective performs statistical inference with its finite-sample dataset.

\subsection{Signal planting}

The success is defined as
\[
S(\alpha) := \underset{x \sim \mathcal{D}}{\mathbb{P}}(f(g(x)) = y^*).
\]
The bound (\ref{ineq:sp}) from Theorem \ref{thm:sp} under the feature-label strategy becomes:
    \begin{align}
    \label{ineq:sp_idr}
        S(\alpha) &\ge \underset{\tilde{x} \sim \tilde{\mathcal{D}}}{\mathbb{P}} \left[  \alpha\underset{\tilde{\mathcal{D}}}{\mathbb{P}}(\tilde{x})-(1-\alpha)\Delta_{\tilde{x}}^{(\mathcal{D})}- \frac{\varepsilon}{1-\varepsilon}> 0 \right]
    \end{align}
where $\Delta_{\tilde{x}}^{(\mathcal{D})} = \underset{y' \in \mathcal{Y}\backslash \{y^* \}}{\max } \left(\underset{\mathcal{D} }{\mathbb{P}}(\tilde{x}, y') - \underset{\mathcal{D}}{\mathbb{P}}(\tilde{x}, y^*)\right)$.

Similarly, the bound (\ref{ineq:sp_fo}) from Theorem \ref{thm:sp_fo} under the feature-only strategy becomes:
    \begin{align}
    \label{ineq:sp_fo_idr}
        S(\alpha) &\ge \underset{x \sim \mathcal{D}}{\mathbb{P}} \left[  \alpha\underset{\tilde{\mathcal{D}}}{\mathbb{P}}(g(x),y^*)-(1-\alpha)\Delta_{g(x)}^{(\mathcal{D})}- \frac{\varepsilon}{1-\varepsilon}> 0 \right].
    \end{align}

\subsection{Signal unplanting}

The success is defined as
\[
S(\alpha) := \underset{x \sim \mathcal{D}}{\mathbb{P}}(f(g(x)) \neq y^*).
\]

The signal unplanting strategy relies on the choice of some parameter $n_{\rm e} < n$ to estimate the most likely label $y_{g(x)} \neq y^*$ given some feature $g(x)$. In the infinite data regime, we assume that $n_{\rm e} \rightarrow \infty$ and $n-n_{\rm e} \rightarrow \infty$.
The bound (\ref{ineq:su}) from Theorem \ref{thm:su} becomes:
    \begin{align}
    \label{ineq:su_idr}
        S(\alpha) &\ge \underset{\tilde{x} \sim \tilde{\mathcal{D}}}{\mathbb{P}} \left[  \alpha\underset{\tilde{\mathcal{D}}}{\mathbb{P}}(\tilde{x})-(1-\alpha)\Delta_{\tilde{x}}^{(\mathcal{D})}- \frac{\varepsilon}{1-\varepsilon}> 0 \right]
    \end{align}
where $\Delta_{\tilde{x}}^{(\mathcal{D})} = \underset{ \mathcal{D}}{\mathbb{P}}(\tilde{x}, y^*) - \underset{\mathcal{D}}{\mathbb{P}}(\tilde{x}, y_{\tilde{x}})$.

\subsection{Signal erasing}

The success is defined as
\[
S(\alpha) := \underset{x \sim \mathcal{D}}{\mathbb{P}}(f(g(x)) = f(x)).
\]

The erasure strategy relies on the ability to compute the optimal label $y^*_{g(x)}$ for every feature $g(x)$. In the infinite data regime, we assume that the collective has access to $y^*_{g(x)}$. The bound (\ref{ineq:se}) from Theorem \ref{thm:se} becomes:
    \begin{align}
    \label{ineq:se_idr}
        S(\alpha) &\ge \underset{x \sim \mathcal{D}}{\mathbb{P}} \left[  \alpha\underset{\mathcal{D}}{\mathbb{P}}(x)-(1-\alpha)\Delta_{x}^{(\mathcal{D})}- \frac{\varepsilon}{1-\varepsilon}> 0 \right]
    \end{align}
where $\Delta_{x}^{(\mathcal{D})} := \underset{y' \in \mathcal{Y}\backslash \{y^*_{g(x)} \}}{\max } \underset{\mathcal{D}}{\mathbb{P}}(x, y') - \underset{\mathcal{D}}{\mathbb{P}}(x,y^*_{g(x)})$.

\subsection{Comparison with Prior Lower Bounds}

In the original version of their work, \citet{hardt2023collectiveaction} adopt the same definition of $\varepsilon$-suboptimality as in our work. However, there was an issue with their proof in the case where $\varepsilon > 0$. For this reason, we restrict our comparison to the bounds in the case where $\varepsilon=0$. 

\begin{proposition}[Bounds comparison in signal planting when $\varepsilon=0$]
    Assume that the collective plays the feature-label signal planting strategy against a classifier that is $\varepsilon$-suboptimal (in the sense of Definition \ref{def:suboptimal}) on $\tilde{\mathcal{X}}$. Then the collective achieves:
    \begin{align*}
    S(\alpha) &\ge \underset{\tilde{x} \sim \tilde{\mathcal{D}}}{\mathbb{P}} \left[  \alpha\underset{\tilde{\mathcal{D}}}{\mathbb{P}}(\tilde{x})-(1-\alpha)\Delta_{\tilde{x}}^{(\mathcal{D})}> 0 \right]\\
    &\ge 1-\frac{1-\alpha}{\alpha} \underset{\mathcal{D}}{\mathbb{P}}\left(\tilde{\mathcal{X}}\right) \underset{\tilde{x} \in \tilde{\mathcal{X}}}{\max} \  \underset{y \in \mathcal{Y}}{\max} \left(\underset{\mathcal{D}}{\mathbb{P}}(y | \tilde{x}) - \underset{\mathcal{D}}{\mathbb{P}}(y^* | \tilde{x}) \right)
    \end{align*}
    where the first inequality is simply Inequality (\ref{ineq:sp_idr}) and the last term is the lower bound originally provided by \citet{hardt2023collectiveaction}.
\end{proposition}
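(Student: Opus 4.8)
The first inequality in the statement is simply Inequality~(\ref{ineq:sp_idr}) specialized to $\varepsilon=0$, so the whole plan is devoted to the second inequality. Writing
\[
M := \underset{\tilde{x}\in\tilde{\mathcal{X}}}{\max}\;\underset{y\in\mathcal{Y}}{\max}\Big(\underset{\mathcal{D}}{\mathbb{P}}(y\mid\tilde{x})-\underset{\mathcal{D}}{\mathbb{P}}(y^*\mid\tilde{x})\Big)\;\ge 0
\]
(nonnegative because $y=y^*$ contributes $0$), I must show $\underset{\tilde{x}\sim\tilde{\mathcal{D}}}{\mathbb{P}}\big[\alpha\,\underset{\tilde{\mathcal{D}}}{\mathbb{P}}(\tilde{x})-(1-\alpha)\Delta_{\tilde{x}}^{(\mathcal{D})}>0\big]\ge 1-\frac{1-\alpha}{\alpha}\,\underset{\mathcal{D}}{\mathbb{P}}(\tilde{\mathcal{X}})\,M$. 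The approach is to pass to the complementary event. Since $\alpha\in(0,1)$, the event inside the probability fails exactly on the set $F:=\{\tilde{x}:\Delta_{\tilde{x}}^{(\mathcal{D})}\ge\frac{\alpha}{1-\alpha}\,\underset{\tilde{\mathcal{D}}}{\mathbb{P}}(\tilde{x})\}$, so the left-hand side equals $1-\underset{\tilde{x}\sim\tilde{\mathcal{D}}}{\mathbb{P}}(F)$, and it suffices to bound $\underset{\tilde{x}\sim\tilde{\mathcal{D}}}{\mathbb{P}}(F)\le\frac{1-\alpha}{\alpha}\,\underset{\mathcal{D}}{\mathbb{P}}(\tilde{\mathcal{X}})\,M$.

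The key step is a pointwise bound on $\Delta_{\tilde{x}}^{(\mathcal{D})}$. Factoring the joint probability as feature-marginal times conditional, $\underset{\mathcal{D}}{\mathbb{P}}(\tilde{x},y')=\underset{\mathcal{D}}{\mathbb{P}}(\tilde{x})\,\underset{\mathcal{D}}{\mathbb{P}}(y'\mid\tilde{x})$ (with $\underset{\mathcal{D}}{\mathbb{P}}(\tilde{x})$ the feature marginal under $\mathcal{D}$), I get for every $\tilde{x}\in\tilde{\mathcal{X}}$
\[
\Delta_{\tilde{x}}^{(\mathcal{D})}=\underset{\mathcal{D}}{\mathbb{P}}(\tilde{x})\,\underset{y'\neq y^*}{\max}\Big(\underset{\mathcal{D}}{\mathbb{P}}(y'\mid\tilde{x})-\underset{\mathcal{D}}{\mathbb{P}}(y^*\mid\tilde{x})\Big)\le\underset{\mathcal{D}}{\mathbb{P}}(\tilde{x})\,M,
\]
the inequality being trivial when $\underset{\mathcal{D}}{\mathbb{P}}(\tilde{x})=0$. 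Combining with the definition of $F$: for $\tilde{x}\in F$ one has $\frac{\alpha}{1-\alpha}\,\underset{\tilde{\mathcal{D}}}{\mathbb{P}}(\tilde{x})\le\Delta_{\tilde{x}}^{(\mathcal{D})}\le\underset{\mathcal{D}}{\mathbb{P}}(\tilde{x})\,M$, hence $\underset{\tilde{\mathcal{D}}}{\mathbb{P}}(\tilde{x})\le\frac{1-\alpha}{\alpha}\,\underset{\mathcal{D}}{\mathbb{P}}(\tilde{x})\,M$. Since $F\subseteq\tilde{\mathcal{X}}$, summing over $\tilde{x}\in F$ gives $\underset{\tilde{x}\sim\tilde{\mathcal{D}}}{\mathbb{P}}(F)=\sum_{\tilde{x}\in F}\underset{\tilde{\mathcal{D}}}{\mathbb{P}}(\tilde{x})\le\frac{1-\alpha}{\alpha}\,M\sum_{\tilde{x}\in\tilde{\mathcal{X}}}\underset{\mathcal{D}}{\mathbb{P}}(\tilde{x})=\frac{1-\alpha}{\alpha}\,\underset{\mathcal{D}}{\mathbb{P}}(\tilde{\mathcal{X}})\,M$, which with the reduction above completes the proof.

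I do not expect a genuine obstacle: the argument is essentially bookkeeping. The two points requiring care are (i) keeping the marginals $\mathcal{D}_{\mathcal{X}}$ and $\tilde{\mathcal{D}}_{\mathcal{X}}$ distinct throughout — the chain works precisely because $\Delta_{\tilde{x}}^{(\mathcal{D})}$ is defined through $\mathcal{D}$, which lets it bridge $\underset{\tilde{\mathcal{D}}}{\mathbb{P}}(\tilde{x})$ and $\underset{\mathcal{D}}{\mathbb{P}}(\tilde{x})$ — and (ii) observing that the bound of \citet{hardt2023collectiveaction} is recovered from the middle expression by two separate relaxations, namely replacing the sum over the failure set $F$ by the sum over all of $\tilde{\mathcal{X}}$, and replacing each per-feature quotient $\Delta_{\tilde{x}}^{(\mathcal{D})}/\underset{\mathcal{D}}{\mathbb{P}}(\tilde{x})$ by the global maximum $M$. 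This makes transparent that our bound dominates theirs, with equality only in degenerate regimes (for instance $\alpha$ small enough that $F=\tilde{\mathcal{X}}$, together with $M$ attained uniformly over $\tilde{\mathcal{X}}$). The $\varepsilon=0$ restriction is inherited directly from Inequality~(\ref{ineq:sp_idr}) and plays no further role.
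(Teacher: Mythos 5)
Your proof is correct and is essentially the paper's argument in complementary form: the paper bounds $\Delta_{\tilde{x}}^{(\mathcal{D})}\le \underset{\mathcal{D}}{\mathbb{P}}(\tilde{x})\,M$ pointwise, then uses $\mathbb{1}_{\{1-t>0\}}\ge 1-t$ (valid since $t\ge 0$) and integrates the density ratio $\underset{\mathcal{D}}{\mathbb{P}}(\tilde{x})/\underset{\tilde{\mathcal{D}}}{\mathbb{P}}(\tilde{x})$ against $\tilde{\mathcal{D}}$ to get $\underset{\mathcal{D}}{\mathbb{P}}(\tilde{\mathcal{X}})$, which is exactly your Markov-type summation of $\underset{\tilde{\mathcal{D}}}{\mathbb{P}}(\tilde{x})\le\frac{1-\alpha}{\alpha}M\,\underset{\mathcal{D}}{\mathbb{P}}(\tilde{x})$ over the failure set $F$. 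The only cosmetic difference is that you identify the two sources of slack ($F\subseteq\tilde{\mathcal{X}}$ and the global max $M$) explicitly, which the paper leaves implicit.
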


\begin{proof}
    For all $\tilde{x} \in \tilde{\mathcal{X}}$, we have:
    \begin{align*}
    \alpha\underset{\tilde{\mathcal{D}}}{\mathbb{P}}(\tilde{x})-(1-\alpha)\Delta_{\tilde{x}}^{(\mathcal{D})} &= \alpha\underset{\tilde{\mathcal{D}}}{\mathbb{P}}(\tilde{x})-(1-\alpha)\underset{y' \in \mathcal{Y}\backslash \{y^* \}}{\max } \left(\underset{\mathcal{D} }{\mathbb{P}}(\tilde{x}, y') - \underset{\mathcal{D}}{\mathbb{P}}(\tilde{x}, y^*)\right)\\
    &= \alpha\underset{\tilde{\mathcal{D}}}{\mathbb{P}}(\tilde{x})-(1-\alpha)\underset{y' \in \mathcal{Y}\backslash \{y^* \}}{\max } \left(\underset{\mathcal{D} }{\mathbb{P}}(y'|\tilde{x}) - \underset{\mathcal{D}}{\mathbb{P}}(y^* |\tilde{x})\right)\underset{\mathcal{D} }{\mathbb{P}}(\tilde{x})\\
    &\ge \alpha\underset{\tilde{\mathcal{D}}}{\mathbb{P}}(\tilde{x})-(1-\alpha) \ \underset{\tilde{x} \in \tilde{\mathcal{X}}}{\max } \ \underset{y' \in \mathcal{Y}}{\max } \left(\underset{\mathcal{D} }{\mathbb{P}}(y'|\tilde{x}) - \underset{\mathcal{D}}{\mathbb{P}}(y^* |\tilde{x})\right)\underset{\mathcal{D} }{\mathbb{P}}(\tilde{x})
    \end{align*}
    Therefore 
    \begin{align*}
        \underset{\tilde{x} \sim \tilde{\mathcal{D}}}{\mathbb{P}} \left[  \alpha\underset{\tilde{\mathcal{D}}}{\mathbb{P}}(\tilde{x})-(1-\alpha)\Delta_{\tilde{x}}^{(\mathcal{D})}> 0 \right] &\ge \underset{\tilde{x} \sim \tilde{\mathcal{D}}}{\mathbb{P}} \left[\alpha\underset{\tilde{\mathcal{D}}}{\mathbb{P}}(\tilde{x})-(1-\alpha) \ \underset{\tilde{x} \in \tilde{\mathcal{X}}}{\max } \ \underset{y \in \mathcal{Y}}{\max } \left(\underset{\mathcal{D} }{\mathbb{P}}(y|\tilde{x}) - \underset{\mathcal{D}}{\mathbb{P}}(y^* |\tilde{x})\right)\underset{\mathcal{D} }{\mathbb{P}}(\tilde{x}) > 0 \right] \\
        &=\underset{\tilde{x} \sim \tilde{\mathcal{D}}}{\mathbb{P}} \left[1-\frac{1-\alpha}{\alpha} \ \underset{\tilde{x} \in \tilde{\mathcal{X}}}{\max } \ \underset{y \in \mathcal{Y}}{\max } \left(\underset{\mathcal{D} }{\mathbb{P}}(y|\tilde{x}) - \underset{\mathcal{D}}{\mathbb{P}}(y^* |\tilde{x})\right)\frac{\underset{\mathcal{D} }{\mathbb{P}}(\tilde{x})}{\underset{\tilde{\mathcal{D}}}{\mathbb{P}}(\tilde{x})} > 0 \right]\\
        &= \underset{\tilde{x} \sim \tilde{\mathcal{D}}}{\mathbb{E}}\left[ \mathbb{1}_{\left\{ 1-\frac{1-\alpha}{\alpha} \ \underset{\tilde{x} \in \tilde{\mathcal{X}}}{\max } \ \underset{y \in \mathcal{Y}}{\max } \left(\underset{\mathcal{D} }{\mathbb{P}}(y|\tilde{x}) - \underset{\mathcal{D}}{\mathbb{P}}(y^* |\tilde{x})\right)\frac{\underset{\mathcal{D} }{\mathbb{P}}(\tilde{x})}{\underset{\tilde{\mathcal{D}}}{\mathbb{P}}(\tilde{x})} > 0  \right\}}  \right]\\
        &\ge \underset{\tilde{x} \sim \tilde{\mathcal{D}}}{\mathbb{E}}\left[ 1-\frac{1-\alpha}{\alpha} \ \underset{\tilde{x} \in \tilde{\mathcal{X}}}{\max } \ \underset{y \in \mathcal{Y}}{\max } \left(\underset{\mathcal{D} }{\mathbb{P}}(y|\tilde{x}) - \underset{\mathcal{D}}{\mathbb{P}}(y^* |\tilde{x})\right)\frac{\underset{\mathcal{D} }{\mathbb{P}}(\tilde{x})}{\underset{\tilde{\mathcal{D}}}{\mathbb{P}}(\tilde{x})}    \right]\\
        &= 1-\frac{1-\alpha}{\alpha} \underset{\mathcal{D}}{\mathbb{P}}\left(\tilde{\mathcal{X}}\right) \underset{\tilde{x} \in \tilde{\mathcal{X}}}{\max} \  \underset{y \in \mathcal{Y}}{\max} \left(\underset{\mathcal{D}}{\mathbb{P}}(y | \tilde{x}) - \underset{\mathcal{D}}{\mathbb{P}}(y^* | \tilde{x}) \right)
    \end{align*}
\end{proof}

The authors subsequently revised their proof when $\varepsilon > 0$ by modifying the suboptimality definition, which we refer to as $\varepsilon$-conditional suboptimality to provide clarity and distinguish between the two definitions: 

\begin{definition}
Let  $\varepsilon> 0$. A classifier $f: \mathcal{X} \rightarrow \mathcal{Y}$ is $\varepsilon$-conditionally suboptimal on a set $\mathcal{X}' \subseteq \mathcal{X}$ under the distribution $\mathcal{P}$ if there exists a distribution $\tilde{\mathcal{P}}$ with $TV(\mathcal{P}_{Y \mid X=x}, \tilde{\mathcal{P}}_{Y \mid X=x}) \le \varepsilon$ such that
\[
f(x) \in \underset{y \in \mathcal{Y}}{\text{argmax }} \tilde{\mathcal{P}}(x,y)
\]
for all $x \in \mathcal{X}'$.
\end{definition}

Using this definition of conditional suboptimality, we could derive bounds similar to those given in (\ref{ineq:sp}), (\ref{ineq:sp_fo}), (\ref{ineq:su}), (\ref{ineq:se}), (\ref{ineq:sp_idr}), (\ref{ineq:sp_fo_idr}), (\ref{ineq:su_idr}), and (\ref{ineq:se_idr}), and conduct the same kind of comparisons with the bounds from \citet{hardt2023collectiveaction}.

\end{document}